\documentclass{article}

\usepackage{microtype}
\usepackage{graphicx}
\usepackage{booktabs} %
\usepackage{xspace}
\usepackage{dsfont}
\usepackage{graphicx}
\usepackage{wrapfig}
\usepackage{caption}
\usepackage{subcaption}
\usepackage{hyperref}
\usepackage{mathrsfs}

\usepackage{amsmath}
\usepackage{amssymb}
\usepackage{mathtools}
\usepackage{amsthm}

\usepackage[capitalize,noabbrev]{cleveref}
\usepackage[accepted]{icml2022}

\theoremstyle{plain}
\newtheorem{theorem}{Theorem}[section]
\newtheorem{proposition}[theorem]{Proposition}

\theoremstyle{definition}
\newtheorem{definition}[theorem]{Definition}

\theoremstyle{remark}
\newtheorem{remark}[theorem]{Remark}
\newtheoremstyle{PropositionNum}
        {\topsep}{\topsep}              %
        {\itshape}                      %
        {}                              %
        {\bfseries}                     %
        {.}                             %
        { }                             %
        {\thmname{#1}\thmnote{ \bfseries #3}}%
    \theoremstyle{PropositionNum}
    \newtheorem{propositionnum}{Proposition}

\usepackage[textsize=tiny]{todonotes}

\newcommand{\ba}[0]{\mathbf{a}}

\newcommand{\bs}[0]{\mathbf{s}}
\newcommand{\bg}[0]{\mathbf{g}}

\newcommand{\state}[0]{\bs}

\newcommand{\obs}[0]{\bs}
\newcommand{\goal}[0]{\bg}
\newcommand{\action}[0]{\ba}

\newcommand{\ouracronym}[0]{GCB\xspace}

\icmltitlerunning{Analogies in Goal-Conditioned Reinforcement Learning}

\begin{document}
\twocolumn[
\icmltitle{Bisimulation Makes Analogies in Goal-Conditioned Reinforcement Learning}

\icmlsetsymbol{equal}{*}

\begin{icmlauthorlist}
\icmlauthor{Philippe Hansen-Estruch}{ucb}
\icmlauthor{Amy Zhang}{ucb,fair}
\icmlauthor{Ashvin Nair}{ucb}
\icmlauthor{Patrick Yin}{ucb}
\icmlauthor{Sergey Levine}{ucb}
\end{icmlauthorlist}

\icmlaffiliation{ucb}{University of California, Berkeley}
\icmlaffiliation{fair}{Meta AI Research}

\icmlcorrespondingauthor{Philippe Hansen-Estruch}{hansenpmeche@berkeley.edu}
\icmlcorrespondingauthor{Amy Zhang}{amyzhang@fb.com}

\vskip 0.3in
]

\printAffiliationsAndNotice{}  %

\begin{abstract}
Building generalizable goal-conditioned agents from rich observations is a key to reinforcement learning (RL) solving real world problems. Traditionally in goal-conditioned RL, an agent is provided with the exact goal they intend to reach. However, it is often not realistic to know the configuration of the goal before performing a task. A more scalable framework would allow us to provide the agent with an example of an analogous task, and have the agent then infer what the goal should be for its current state. We propose a new form of state abstraction called \emph{goal-conditioned bisimulation} that captures \emph{functional equivariance}, allowing for the reuse of skills to achieve new goals. We learn this representation using a metric form of this abstraction, and show its ability to generalize to new goals in simulation manipulation tasks. Further, we prove that this learned representation is sufficient not only for goal-conditioned tasks, but is amenable to any downstream task described by a state-only reward function. Videos can be found at \url{https://sites.google.com/view/gc-bisimulation}.
\end{abstract}

\section{Introduction}
\label{sec:intro}
Goal-conditioned RL has the potential to train agents that can accomplish a variety of tasks when given a goal. However, the way the goal is represented in a goal-conditioned RL problem has a critical effect on how the resulting policy will interpret goals. For example, if a person is asked to accomplish the goal of cutting a vegetable, they can perform this task for any vegetable. The goal is represented in a way that is \emph{invariant} to irrelevant factors, and potentially even \emph{equivariant} to factors that vary between similar tasks, such as cutting long carrots versus short radishes. We use this equivariance to generalize to new problems and domains, by drawing associations to problems and domains we have experienced in the past.
If we can acquire representations that have these properties, then we can similarly specify goals for goal-conditioned policies in one setting, and have them carry out those goals in many diverse settings. This frees us from needing an exact goal image for achieving the same functional task in a new setting. As shown in \cref{fig:teaser}, given an agent that has learned to cut carrots, rather than needing to present the agent with an image of chopped up radishes, we can instead say something along the lines of, ``Do the same thing to the radish that you did to the carrot.''

\begin{figure}[t]
    \centering
    \includegraphics[width=.6\linewidth]{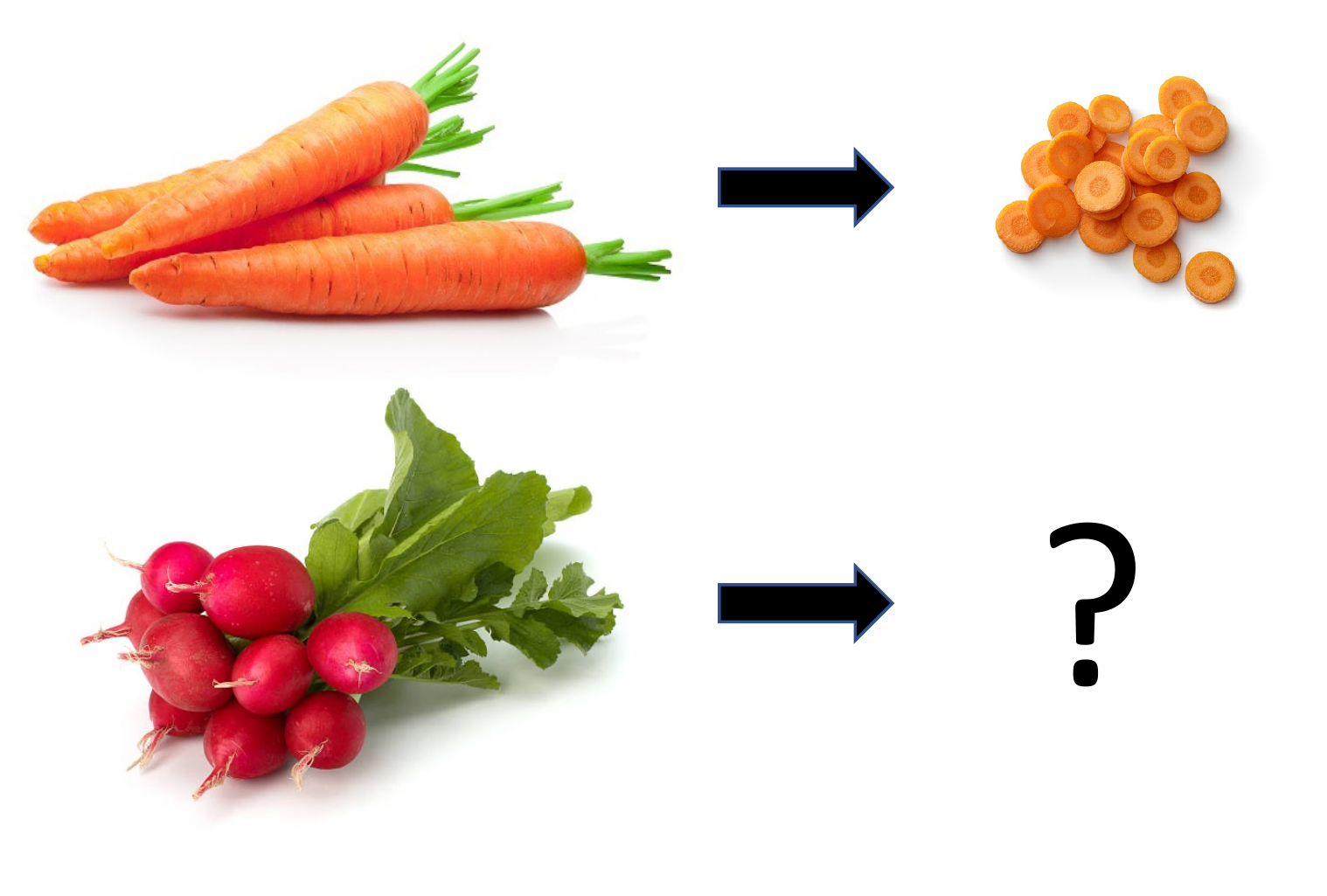}
    \vspace{-10pt}
    \caption{Analogous tasks of dicing carrots and radishes. Although the target objects are different, the skill required and functional difference between the initial state and goal images are similar. Our goal is to use this form of analogy to describe goals for new, unseen environments.}
    \label{fig:teaser}
\end{figure}

Prior work has proposed representation learning objectives that induce invariance to irrelevant factors through the use of domain knowledge about the relevant state features for a task~\cite{jonschowski2015priors, jonschkowski2017pve}, via data augmentations and contrastive learning~\cite{laskin_srinivas2020curl}, dynamics~\citep{watter2015embed,gelada2019deepmdp}, or proxy computer vision tasks (e.g., image reconstruction or segmentation)~\cite{lange2010deep, sax2018midlevel}.
In goal-conditioned problems however, which features are relevant or irrelevant depends not only on the current state but also the goal. While there may be uncontrollable aspects of an environment that can always be ignored, knowledge of the task and current state is necessary to determine what is relevant to complete the task. Rather than construct representations that ignore features that are never relevant, our goal is to construct functionally equivariant representations that only capture changes between state-goal pairs. 
These equivariant representations can then be applied to depict goals for new states as a form of analogy. A key question therefore is: how do we design a form of representation learning objective that can perform this type of analogy?

\begin{figure}[t]
    \centering
    \includegraphics[width=1\linewidth]{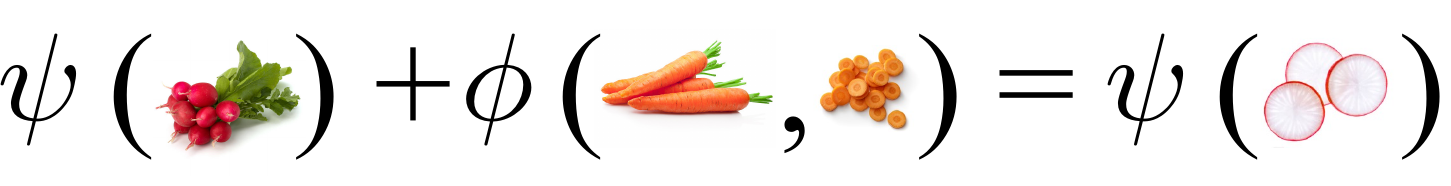}
    \vspace{-20pt}
    \caption{An example of abstractions with a compositional form.}
    \vspace{-10pt}
    \label{fig:teaser_analogy}
\end{figure}
In this work, we adapt a strict form of state abstraction called bisimulation~\citep{larsen1989bisim,ferns2004bisimulation} to address the goal-conditioned RL problem setting from rich observations. Rather than grouping equivalent states as bisimulation does, we group equivalent tasks --- allowing policies to leverage analogous tasks when attempting new ones. 
One can also define a policy-dependent distance metric form of our goal-conditioned bisimulation (\ouracronym) and use it to define a paired-state embedding space. 
We can then construct an objective for learning a state representation that can compose this task abstraction with states to depict new goals, capable of ``filling in the blank'', as shown in  \cref{fig:teaser_analogy}.  Such a representation is \emph{universal} --- it not only can be used for any goal-conditioned task, but is sufficient to train optimal policies for any state-only reward function.

Our contributions consist of the following: 
1) an objective that learns a functionally equivariant abstraction over goal-conditioned tasks, 2) a demonstration that this embedding space is capable of composing states with task abstractions to depict and solve for new goals, 3) a novel value bound that can be applied to any downstream task,
and 4) an evaluation on manipulation environments that shows improved performance on goal-conditioned tasks compared to other self-supervised representation learning methods.

\section{Related Work}
\label{sec:related_work}
Our approach learns general-purpose representations for goal-conditioned RL based on principles from state aggregation. Our approach is orthogonal and complementary to goal-conditioned methods: we are not proposing a new algorithm for goal-conditioned RL, but rather a representation learning approach that can be used with goal-conditioned methods (and can also be used for general RL problems). Therefore, in our experiments, we compare to other representation learning approaches and use the same goal-conditioned RL algorithm for all baselines. We review prior work on representation learning here and more extensive prior work on goal-conditioned RL in \cref{app:related_work}.

Work in \textbf{Representation learning for RL} can be decomposed into two categories: 1) state abstractions, which typically make use of task reward, and 2) self-supervised representation learning. As the scope of this body of work is very large, we focus on works exemplifying specific trends rather than a comprehensive review.
\citet{li2006stateabs} defined various forms of \textbf{state abstractions} for Markov decision processes (MDPs) that group states into clusters while preserving some property (e.g.\ the optimal value, all values, or all action values from each state). The strictest form, which generally preserves the most properties, is \textit{bisimulation}~\citep{larsen1989bisim}. Bisimulation only groups states that are indistinguishable w.r.t.\ reward sequences output given any action sequence tested. A related concept is bisimulation metrics~\citep{ferns2014bisim_metrics,ferns2011contbisim}, which measure how ``behaviorally similar'' states are. 
\citet{gelada2019deepmdp,zhang2021dbc,agarwal2021contrastive,castro2021mico} use behavior-based state abstractions to achieve better performance on control tasks by dropping irrelevant information.

\textbf{Self-supervised RL methods} rely on autoencoders~\citep{lange2010deep,Lange2012Autonomous,yarats2019sacae}, contrastive objectives~\citep{oord2018cpc,laskin_srinivas2020curl}, data augmentation~\citep{yarats2021drq}, and mutual information maximization~\citep{anand_unsupervised_2019,choi2021variationalgc} to learn representations for downstream control. 
Some work focuses on representation learning specifically for the goal-conditioned setting --- \citet{ghosh2018learning} uses goal-conditioned policies to extract a representation where Euclidean distances between states correspond to expected differences between actions to reach them. 
Similarly, \citet{yang2020plan2vec,tian2021modelbased} use functional distances to plan to reach goals.
However, none of these methods are capable of performing the types of analogies described in \cref{sec:intro}.

The idea of using \textbf{analogies} to improve learning is also not new~\citep{jaime1983analogy}, and much work has studied how compositionality and analogies arise naturally in learned representations.
\citet{mikolov2013word2vec} showed that latent language representations arising from skip-gram models were composable and produced analogies: e.g., $\phi(\text{``capital"}) + \phi(\text{``Germany"}) = \phi(\text{``Berlin"})$. Such analogies allow us to understand the latent space better and also demonstrate the structure of the latent space, which suggests it might be useful for downstream tasks. Similar analogies are also possible in visual domains~\cite{reed2015analogy,jayaraman-iccv2015}. 
In control, analogy representations enabled goal-directed grasping~\cite{devin2018grasp2vec} and composing plans~\cite{devin2019cpv}. Similar in spirit to this work, our method imposes a network architecture to learn analogies, but enforces it through goal-conditioned bisimulation, making it a more general method for RL.

\textbf{Metrics} have also been leveraged in RL for both MDPs and states for the goal of positive transfer~\citep{carroll2005tasksim,fernandez2006reuse,konidaris2012transfer}.
\citet{song2016mdpmetric} propose a metric between MDPs using the Kantorovich and Hausdorff metrics and use it for transferring value functions between source tasks and a target task. \citet{gelada2019deepmdp} present results that their learned representation bounds the bisimulation metric and \citet{zhang2021dbc,agarwal2021contrastive,castro2021mico} use on-policy versions of the bisimulation metric~\citep{castro20bisimulation} to learn an embedding space. Similar to \citet{zhang2021dbc}, we embed our metric in a representation space and show improved performance with downstream control, but with goal-conditioned tasks and the use of analogies rather than robustness to distractors.

\section{Preliminaries}
\label{sec:prelim}

We assume that the environment follows a \textbf{goal-conditioned Markov Decision Process} (GCMDP), which can be described with the tuple $\mathcal{M} = (\mathcal{S}, \mathcal{A}, \mathcal{P},  \mathcal{G}, \mathcal{R}, \gamma)$ where $\mathcal{S}$ is the state space, $\mathcal{A}$ the action space, $\mathcal{P}$ the dynamics model such that $\mathcal{P}(s' | s, a)$ dictates the probability of transitioning to state $s'$ from state $s$ after applying action $a$, $\mathcal{G} \subset \mathcal{S}$ is the goal space which is a subset of the state space, and $\mathcal{R}(s, a, g):=\mathds{1}(s'=g)$ is a sparse reward function which returns $1$ if $(s, a)$ transitions to goal $g$ and $0$ otherwise. We refer to a state-goal pair $(s,g)$ as a \emph{task}, drawing connections to the single task RL setting where tasks are defined by state-action reward functions.
While our primary concern is learning from images, we do not address the partial-observability problem explicitly: we follow prior work and use stacked pixel observations as the fully-observed system state $s$.

As noted in \cref{sec:related_work}, \textbf{bisimulation} is a form of state abstraction that groups states $\bs_i$ and $\bs_j$ that are ``behaviorally equivalent''~\citep{li2006stateabs}.
For any action sequence $\ba_{0:\infty}$, the probabilistic sequence of rewards from $\bs_i$ and $\bs_j$ are identical.
A more compact definition has a recursive form: two states are bisimilar if they share both the same immediate reward and equivalent distributions over the next bisimilar states~\citep{larsen1989bisim,Givan2003EquivalenceNA}. 
\vspace{-1mm}
\begin{definition}[Bisimulation Relations~\citep{Givan2003EquivalenceNA}]
Given an MDP $\mathcal{M}$, an equivalence relation $B$ between states is a bisimulation relation if, for all states $\state_i,\state_j\in\mathcal{S}$ that are equivalent under $B$ (denoted $\state_i\equiv_B\state_j$) the following conditions hold:
\begin{alignat}{2}
    \mathcal{R}(\state_i,\action)&\;=\;\mathcal{R}(\state_j,\action) 
    &&\quad \forall \action\in\mathcal{A}, \label{eq:bisim-discrete-r} \\
    \mathcal{P}(G|\state_i,\action)&\;=\;\mathcal{P}(G|\state_j,\action) 
    &&\quad \forall \action\in\mathcal{A}, \quad \forall G\in\mathcal{S}_B, \label{eq:bisim-discrete-p}
\end{alignat}
where $\mathcal{S}_B$ is the partition of $\mathcal{S}$ under the relation $B$ (the set of all groups $G$ of equivalent states), and $\mathcal{P}(G|\state,\action)=\sum_{\state'\in G}\mathcal{P}(\state'|\state,\action).$
\end{definition}
Bisimulation relates functionally equivalent states together in a way can be usefully applied to representation learning. 
Exact partitioning with bisimulation relations is generally impractical in continuous state spaces, as the relation is highly sensitive to infinitesimal changes in the reward function or dynamics. For this reason, \textbf{Bisimulation Metrics}~\citep{ferns2011contbisim,ferns2014bisim_metrics,castro20bisimulation} softens the concept of state partitions, and instead defines a pseudometric space $(\mathcal{S}, d)$, where a distance function $d:\mathcal{S}\times\mathcal{S}\mapsto\mathbb{R}_{\geq 0}$ measures the ``behavioral similarity'' between two states\footnote{Note that $d$ is a pseudometric, meaning the distance between two different states can be zero.}.

Computing bisimulation metrics is a harder class of problem than just finding an optimal policy due to the requirement that \cref{eq:bisim-discrete-r} and \cref{eq:bisim-discrete-p} hold true for $\forall \action\in\mathcal{A}$. In control problems, we typically only care about the optimal action $\action^*$ for any state. 
\citet{castro20bisimulation} define on-policy bisimulation, which we adopt to get the following on-policy metric:
\begin{align}
d(\obs_i,\obs_j)&:=\mathbb{E}_{\action\sim \pi}|\mathcal{R}(\obs_i,\pi^*(\obs_i))-\mathcal{R}(\obs_j,\pi^*(\obs_j))| \\ & + \gamma\cdot W_1(\mathcal{P}(\obs_i, \pi^*(\obs_i)),\mathcal{P}(\obs_j, \pi^*(\obs_j)); d). \nonumber
\end{align}
However, this definition relies on a single reward function and focuses on grouping equivalent states. In \cref{sec:gcb_def} we explore how to modify this metric for goal-conditioned tasks in order to group equivalent (or analogous) tasks.

\section{Functional Equivariance in an Embedding}
\label{sec:concepts}
We now build on the concepts introduced in \cref{sec:prelim} to step through a goal-conditioned metric in \cref{sec:gcb_def}.
Na\"ive lifting of the bisimulation metric into goal-conditioned MDPs gives us a metric over a \emph{paired state space}, since it depends on both the current state and goal. This construction prevents compositional generalization to novel state-goal pairs, as we will show in \cref{sec:exps}.
To obtain the type of compositional behavior we desire, we must leverage this back into a single state embedding. One form this can take is through simple arithmetic operations, where adding the task abstraction embedding to a single state embedding yields the desired goal. We will address how to construct an objective that gives rise to this property in \cref{sec:analogies}. First, we introduce the basic version of applying bisimulation to goal-conditioned problems.

\subsection{Defining a Goal-Conditioned Metric}
\label{sec:gcb_def}
Na\"ively, we can apply state abstractions and metrics designed for single task settings to goal-conditioned ones by redefining them over pairs of states --- a state-goal pair that defines a task. We walk through that process in this section.

Bisimulation is defined only for a single reward function, and therefore cannot be trivially applied to goal-conditioned settings. Furthermore, the central idea behind bisimulation is that it is an \emph{equivalence relation} that allows us to cluster states to treat them analogously. We propose to lift this idea to families of tasks by defining an equivalence relation \emph{over tasks},
giving us a functionally equivariant abstraction. 
Now that we have defined our environment in \cref{sec:prelim}, we can now define a form of goal-conditioned bisimulation that defines an abstraction over state-goal pairs using a goal-conditioned reward function:
\begin{definition}[Goal-conditioned Bisimulation Relations]
\label{def:gc_bisim}
Given an MDP $\mathcal{M}$, as described in section 3, we define an equivalence relation $B$ between states. This relation is a goal-conditioned bisimulation relation if, for all state-goal pairs $(\state_i,\goal_i),(\state_j,\goal_j)\in\mathcal{S}\bigcup\mathcal{G}$ that are equivalent under $B$ (denoted $\state_i\equiv_B\state_j$) the following conditions hold:
\begin{alignat}{2}
    \mathcal{R}(\state_i, \action, \goal_i)&\;=\;\mathcal{R}(\state_j, \action, \goal_j), &&\quad \forall \action\in\mathcal{A}, \nonumber \\
    \mathcal{P}(G|\state_i,\action)&\;=\;\mathcal{P}(G|\state_j,\action) 
    &&\quad \forall \action\in\mathcal{A}, \forall G\in\mathcal{S}_B, \nonumber
\vspace{-3pt}
\end{alignat}
\end{definition} 
Similar to in \cref{sec:prelim}, we can define an on-policy version of this relation, which gives rise to a paired-state metric:
\begin{equation}
\begin{aligned}
\label{eq:gcbm}
\small
    d_\pi&(\obs_i,\goal_i; \obs_j, \goal_j) = \\ &|\mathcal{R}(\state_i,\pi(\state_i,\goal_i), \goal_i) - \mathcal{R}(\state_j, \pi(\state_j,\goal_j), \goal_j)| \\ 
    &+ \gamma \mathcal{W}_2(\mathcal{P}(\state_i' |\state_i, \pi(\state_i,\goal_i), \mathcal{P}(\state_j' |\state_j, \pi(\state_j,\goal_j))
\end{aligned}
\end{equation}
where $\pi$ is the goal-conditioned policy and $\mathcal{W}_2$ is the Wasserstein distance. This distance metric captures functional equivalence across tasks, but is only defined for state-goal pairs and therefore cannot handle single states. We address this issue in the next section by using this metric to define a new state abstraction capable of forming analogies.

\subsection{Defining a State Abstraction to Form Analogies} 
\label{sec:analogies}
To generalize to novel state-goal combinations and make analogies across tasks, we need a single state representation represented by a state encoder $\psi$ that still maintains the properties present in the state-goal representation $\phi$. 
We take inspiration from works that have shown the arithmetic properties of embeddings~\citep{mikolov2013word2vec,devin2018grasp2vec} to enforce the same type of intuitive structure in the state encoder:
\begin{equation}
\label{eq:analogy}
    \psi(\goal_i) - \psi(\state_i) = \phi(\state_i,\goal_i), \quad \forall \state_i\in \mathcal{S},\forall \goal_i\in\mathcal{G}.
\end{equation}
From \cref{def:gc_bisim} we know that if there exists $\state_j\in\mathcal{S},\goal_j\in\mathcal{G}$ such that  $\phi(\state_i,\goal_i)=\phi(\state_j,\goal_j)$, then
$$\psi(\goal_j) - \psi(\state_j) := \psi(\goal_i) - \psi(\state_i),$$
forming an ``analogy'' of state-goal pairs $(\state_i,\goal_i)$ and $(\state_j,\goal_j)$.
An example of a form of analogy the representation constructed by $\psi$ should be capable of is shown in \cref{fig:teaser_analogy}. 
By learning a single state representation $\psi$, we can use knowledge about different state-goal pairs and their properties to generalize to new state-goal pairs at evaluation time using a simple form of arithmetic in latent space. This property frees us from the traditional goal-conditioned structure, which requires access to the target goal we want an agent to reach. \emph{Instead, an agent can reason about goals it has never seen before through examples of analogous tasks.}

\section{\ouracronym: Constructing Embedding Spaces}
\label{sec:method}
We now present an algorithm that combines representation learning of the two components presented in \cref{sec:concepts} and downstream control for image-based goal-conditioned problems. In principle, \ouracronym can be paired with any goal-conditioned RL method, including both online and offline algorithms. In the scope of our experiments, however, we focus on the offline setting to decouple from exploration difficulties. We first show how \ouracronym utilizes the metric proposed in \cref{sec:gcb_def} to construct a single state representation amenable to generalization through the use of analogies. 

\begin{figure}[t]
    \centering
    \includegraphics[width=.9\linewidth]{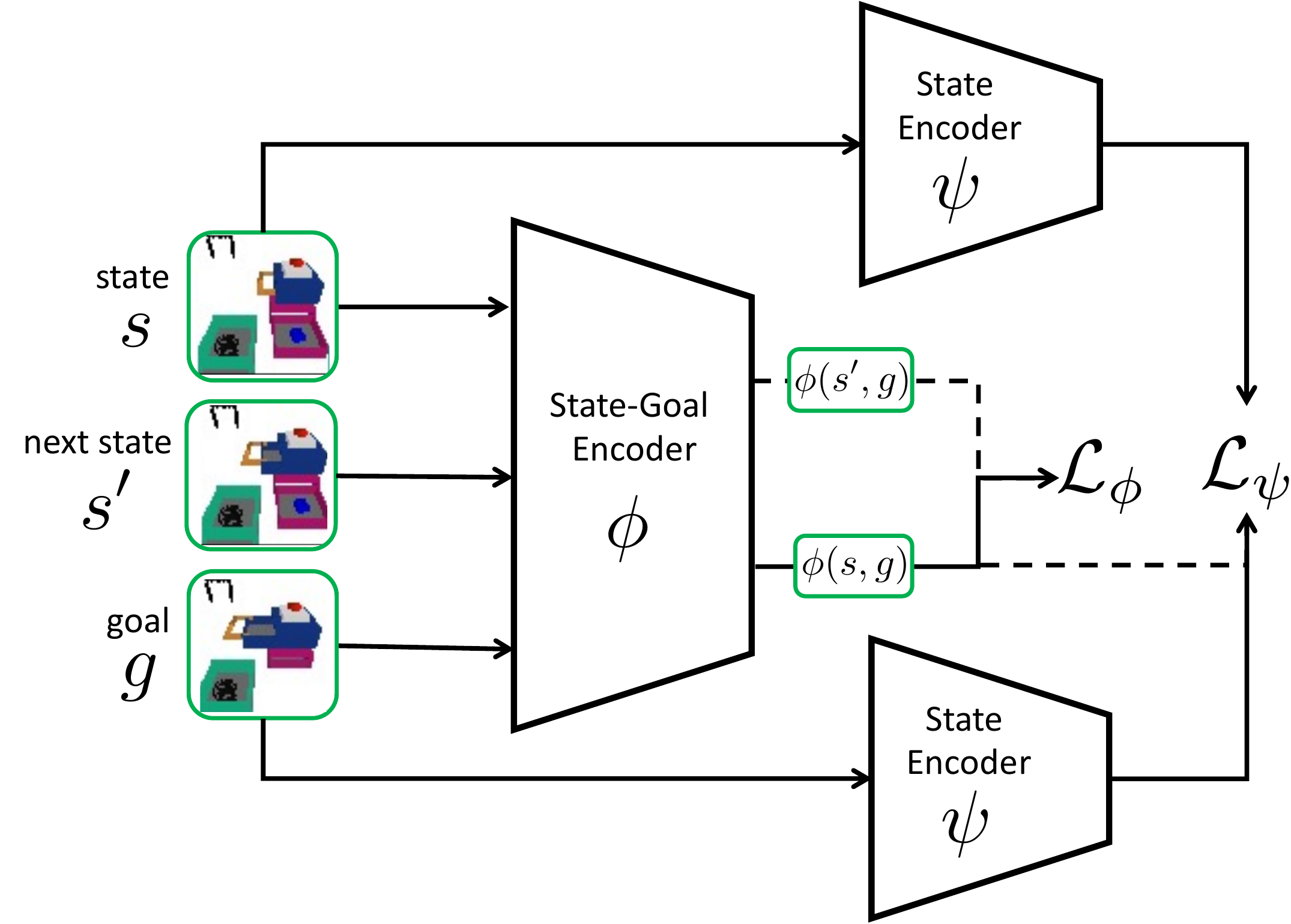}
    \vspace{-10pt}
    \caption{A flow diagram of the representation learning component of GCB. The dashed line represents stopped gradients and the state encoder $\psi$ is a Siamese network using shared weights.}
    \vspace{-10pt}
    \label{fig:main_alg}
\end{figure}

\subsection{Representation Learning Objectives}
Our end goal is to construct a single state representation space with the properties introduced in \cref{sec:analogies}. To do so, we must first learn an approximation of the policy-dependent GCB metric in \cref{eq:gcbm} end-to-end with our goal-conditioned policy $\pi$ during offline RL training. We construct an embedding space $\phi$ where $\ell_1$ distance between state-goal pairs corresponds to their distance under the GCB metric with the following objective:
\begin{align}\label{eq:phi_loss}
\vspace{-10pt}
\mathcal{L}_\phi &= \bigg(||\phi(s_i, g_i) - \phi(s_j, g_j)||_1  - ||r_i - r_j||_2  \\ 
&- \gamma ||\bar{\phi}(s'_i, g_i) - \bar{\phi}(s'_j, g_j)||_2\bigg)^2 \nonumber,
\vspace{-10pt}
\end{align} 
where $\bar{\cdot}$ denotes a stop-gradient. This objective is a direct instantiation of \cref{eq:gcbm} where, instead of learning a parameterized distance function $d_\pi$, we instead directly optimize for an embedding space $\phi$ with the above property. 
We prove that this update procedure converges to our desired metric in \cref{sec:theory}.

We can now use this $\phi$ embedding to define an objective for constructing a single state representation that is capable of performing simple forms of ``arithmetic'' as described in \cref{eq:analogy} in \cref{sec:analogies}:
\begin{equation}
\label{eq:psi_obj}
\vspace{-10pt}
\mathcal{L}_\psi = \bigg((\bar{\phi}(s_i, g_i) - \bar{\phi}(g_i, g_i)) - (\psi(g_i) - \psi(s_i))\bigg)^2.
\end{equation}
Note that \cref{eq:psi_obj} is a slightly different instantiation of \cref{eq:analogy} where we add $\phi(g_i,g_i)$ as a normalizing constant. We found this to perform better empirically.\footnote{$\phi(g, g)$ is the universal goal point in the paired state encoder space, it should map to the same constant point for all goals.} An ablation of this design decision can be found in \cref{app:ablations}. 
This objective will imbue the single state representation $\psi$ with the ability to form analogies across tasks.

In summary, \ouracronym learns two representations: $\phi(s, g)$, which generalizes across state-goal pairs, and $\psi(s)$, which is shaped in latent space by $\phi$ but can generalize across single states. We will show in \cref{sec:analogy_experiments} that $\phi$ can capture \emph{invariances} across tasks, which one can think of as only storing the ``delta'' between the initial state and goal, whereas $\psi$ is capable of a form of arithmetic that captures \emph{functional equivariance} across states and analogous tasks.

\begin{algorithm}[t]
   \caption{\ouracronym Training Algorithm} \label{alg:Goalbisim}
\begin{algorithmic}[1]
   \STATE \textbf{Given:} RL algorithm $\mathcal{A}$, replay buffer $\mathcal{D} = \{(s, a, s', r, g)_i\}_{i=1}^N$, learning rates $\alpha_\psi$, $\alpha_\phi$
  \STATE Initialize $\pi$, $\phi$, $\psi$
   \FOR{training iteration $k=1, 2, ...$} %
  \STATE $B_1 \leftarrow \{s_i, a_i, s_i', r_i, g_i\}_{i=1}^B \sim \mathcal{D}$\algorithmiccomment{Sample Batch}
  \STATE $B_2 \leftarrow \{s_j, a_j, s_j', r_j, g_j\}_{j=1}^B =$ permute $B_1$
   \STATE $\phi \leftarrow \phi - \alpha_{\phi} \nabla_{\phi}\mathcal{L}_{\phi}(B_1, B_2)$ 
   \STATE $\psi \leftarrow \psi - \alpha_{\psi} \nabla_\psi\mathcal{L}_\psi(B_1)$ 
   \STATE Update $\pi(\psi(s), \phi(s, g))$ with $\mathcal{A}$ on $B_1$
   \ENDFOR
\FOR{eval episode $m=1, 2, ...$} %
  \STATE Sample analogous tasks $(s, g)$ and $(s_a, g_a)$
  \STATE Rollout policy with $\pi(\psi(s), \phi(s_a, g_a))$
  \STATE Compute success of final achieved state against $g$
\ENDFOR
\end{algorithmic}
\end{algorithm}

\subsection{Combining GCB with Offline Downstream Control}
In this section, we describe a way to combine \ouracronym with goal-conditioned RL. Note that \ouracronym is primarily a representation learning method, and is therefore agnostic to the downstream goal-conditioned RL algorithm. 
In the standard goal-conditioned setting we learn a policy $\pi(s, g)$, which with our learned representations becomes $\pi(\psi(s), \psi(g))$.
We show in~\cref{sec:benchmark_experiments} that this enables better performance on goal-conditioned tasks.

However, \ouracronym is designed primarily to solve tasks specified by analogies, where at test time the task goal $g$ is unknown but instead specified by a separate state-goal pair $(s_a, g_a)$ that achieves an analogous outcome with respect to \emph{another} state $s$.
Here, we wish to learn a policy that can condition on standard state-goal pairs $\pi(\psi(s),\phi(s, g))$ and successfully evaluate on an analogous pair $\pi(\psi(s),\phi(s_a,g_a))$.
In \cref{sec:analogy_experiments}, we see how \ouracronym enables good performance when the policy is tasked by an analogous state-goal pair.

\subsection{Architecture \& Additional Implementation Details}
As shown in \cref{fig:main_alg}, \ouracronym uses two encoders: $\phi(s,g)$, which acts as the paired state/goal encoder, and $\psi(s)$, which acts as the state encoder.
Each is implemented as a convolutional neural network with six layers with 32 filters and map to latent spaces with dimensionality 256. 
We use Adam for optimization~\citep{kingma2014adam}.
As we focus on the offline RL setting in our experiments, we implement our method on top of implicit Q-learning (IQL)~\cite{kostrikov2021iql}, a recent offline RL algorithm. 
The representation and policy are trained concurrently.

The $\ell_1$ distance objective in \cref{eq:phi_loss} is enforced with random pairs of transitions. As is standard in contrastive learning, to generate random pairs, we sample a batch $(s, a, s', r, g)_i$ from the replay buffer and randomly permute the samples in the batch as $(s, a, s', r, g)_j$.

In~\cref{app:ablations} we present ablations to evaluate several important implementation decisions: (1) Adding a grounding goal point $\phi(g_i, g_i)$ as a normalizing constant. (2) The choice of $\ell_1$ distance versus $\ell_2$ distance for matching the \ouracronym metric. (3) Backpropagation of RL gradients to the encoder. (4) Adding a reward decoder model $\mathcal{R}(\phi(s,g), \phi(s',g))$ and (5) learned dynamics model as in DBC~\cite{zhang2021dbc}.
Additional implementation details and hyperparameters are in \cref{app:implementation_details}.

\cref{alg:Goalbisim} details the training of \ouracronym. 
The entire process is summarized as follows: 1) we sample a batch from the replay buffer, 2) randomly permute and match the batch and update $\phi$ according to \cref{eq:phi_loss}, 3) then update $\psi$ using $\phi$ according to \cref{eq:psi_obj}, and finally 4) update $\pi$ using the RL algorithm, where $\pi$ can be parameterized either as $\pi(\psi(s_i), \psi(g_i))$ or $\pi(\psi(s_i),\phi(s_a,g_a))$ depending on the goal specification setting.

\section{Value Bounds and Sufficiency Results}
\label{sec:theory}
In this section, we first show that our update procedure for the goal-conditioned bisimulation metric has a unique fixed point which is our desired metric. Further, we draw connections between value functions and this metric that allows us to provide transfer guarantees for a learned policy to new goals.
These results are simple extensions of existing results on bisimulation metrics~\citep{ferns2004bisimulation}. Finally, we also present a result on the universality of a representation learned with this metric that allows us to extend the above bounds to a larger class of downstream reward functions. 

We first show that our goal-conditioned metric definition has a unique fixed point.
\begin{proposition}
\label{prop:goalbisim_fixed_point}
\vspace{-2pt}
Let $\mathfrak{met}$ be the space of bounded pseudometrics on $\mathcal{S}$ and $\pi$ a goal-conditioned policy that is continuously improving. Define $\mathcal{F}:\mathfrak{met} \mapsto \mathfrak{met}$ by
\begin{align}
\mathcal{F}&(d,\pi)(\state_i,\goal_i; \state_j, \goal_j) = \\ 
&(1-c)|\mathcal{R}(\state_i, \pi(\state_i,\goal_i), \goal_i) - \mathcal{R}(\state_j, \pi(\state_j,\goal_j), \goal_j)| \nonumber \\ 
& + c W(d)(\mathcal{P}^{\pi}_{\state_i},\mathcal{P}^{\pi}_{\state_j}), \nonumber
\end{align}
where $c\in(0,1)$ is a metric discount factor.
Then $\mathcal{F}$ has a least fixed point $d^\pi$ which is a  goal-conditioned $\pi$-bisimulation metric. 
\end{proposition}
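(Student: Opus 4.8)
The plan is to hold the policy $\pi$ fixed, view $\mathcal{F}(\cdot,\pi)$ as a self-map of the space $\mathfrak{met}$ of bounded pseudometrics on state--goal pairs, and invoke the Banach fixed-point theorem, exactly as in the single-task treatment of \citet{ferns2004bisimulation}. First I would equip $\mathfrak{met}$ with the supremum metric $\rho(d,d') = \sup |d - d'|$ taken over all pairs of tasks, and recall that $(\mathfrak{met},\rho)$ is complete: a uniform limit of pseudometrics inherits nonnegativity, symmetry, the triangle inequality, and vanishing on the diagonal, so the limit is again a bounded pseudometric.

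Before applying the theorem I would verify that $\mathcal{F}$ maps $\mathfrak{met}$ into itself. The image $\mathcal{F}(d,\pi)$ is nonnegative and symmetric by inspection, vanishes when the two tasks coincide (both the reward gap and the Wasserstein term are then zero), and obeys the triangle inequality because it is a convex combination of the absolute reward difference---itself a pseudometric---and $W(d)$, which is a pseudometric whenever $d$ is, by the standard properties of the Kantorovich distance. Boundedness is immediate since $\mathcal{R}\in\{0,1\}$ and $d$ is bounded.

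The main obstacle is the contraction estimate. Since the reward term in $\mathcal{F}$ does not depend on $d$, for any $d,d'\in\mathfrak{met}$ the pointwise difference reduces to the Wasserstein terms, so it suffices to show that the Wasserstein map is non-expansive in its ground metric, i.e.\ $|W(d)(\mu,\nu) - W(d')(\mu,\nu)| \le \rho(d,d')$ for all distributions $\mu,\nu$. I would prove this with the primal (coupling) formulation: letting $\gamma'$ be an optimal coupling for $W(d')(\mu,\nu)$,
\[ W(d)(\mu,\nu) \le \int d(x,y)\,d\gamma'(x,y) \le \int d'(x,y)\,d\gamma'(x,y) + \rho(d,d') = W(d')(\mu,\nu) + \rho(d,d'), \]
and symmetrizing in $d,d'$ gives the claim. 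Combining this with the structure of $\mathcal{F}$ yields $\rho(\mathcal{F}(d,\pi),\mathcal{F}(d',\pi)) \le c\,\rho(d,d')$; since $c\in(0,1)$, $\mathcal{F}$ is a contraction.

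Finally, Banach's theorem furnishes a unique fixed point $d^\pi$, which is therefore also the least fixed point asserted in the statement. By construction $d^\pi$ satisfies the recursive identity defining the on-policy goal-conditioned metric of \cref{eq:gcbm}, and its zero set recovers the goal-conditioned bisimulation relation of \cref{def:gc_bisim}, so $d^\pi$ is indeed a goal-conditioned $\pi$-bisimulation metric. I would note that the hypothesis that $\pi$ is continuously improving plays no role in the existence argument for a fixed $\pi$; it is relevant only to the stability of $d^\pi$ as the policy is updated during learning, which I would set aside here.
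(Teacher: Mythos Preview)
Your argument is correct, but the paper takes a different, shorter route. Rather than proving the Banach contraction from scratch, the paper reduces to the single-task case: it builds a ``super-MDP'' $\mathcal{M}'$ with state space $\mathcal{S}':=\mathcal{S}\times\mathcal{G}$ and reward $\mathcal{R}'(\state')=\mathds{1}(s'=(g,g))$, observes that the operator $\mathcal{F}$ on the GCMDP coincides with the standard $\pi$-bisimulation operator $\mathcal{F}^\pi$ on $\mathcal{M}'$ (the dynamics do not depend on the goal, and the rewards match by construction), and then invokes \citet{castro20bisimulation}'s existing fixed-point theorem directly on $\mathcal{M}'$.

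What each approach buys: the paper's reduction is terse and, more importantly, exposes the structural fact that goal-conditioned bisimulation is literally ordinary $\pi$-bisimulation on the product space---a viewpoint the paper reuses verbatim for \cref{prop:value_fn_bound}. Your direct contraction argument is self-contained and makes explicit where the discount $c$ enters as the Lipschitz constant, at the cost of essentially reproving Castro's theorem in the goal-conditioned notation. Your observation that the ``continuously improving'' hypothesis is irrelevant to the fixed-$\pi$ existence claim is also apt; the paper's proof likewise never uses it.
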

Proof in \cref{app:proofs}. This result shows that our update procedure in \cref{eq:phi_loss} will converge to our desired metric.
Next, we can show that, for any policy $\pi$, the \ouracronym metric upper bounds the value difference between any two tasks, as described by state-goal pairs. 
\begin{proposition}\label{prop:value_bound}
For any two state goal pairs $(\state_i,\goal_i), (\state_j,\goal_j)\in (\mathcal{S}, \mathcal{G})$ and a given policy $\pi$,
\begin{equation}
    |V^\pi(\state_i,\goal_i) - V^\pi(\state_j,\goal_j)| \leq d^\pi(\state_i,\goal_i; \state_j,\goal_j).
\end{equation}
\vspace{-20pt}
\label{prop:value_fn_bound}
\end{proposition}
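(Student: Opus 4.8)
The plan is to show that the value function $V^\pi$, regarded as a function on state-goal pairs, is $1$-Lipschitz with respect to $d^\pi$; the claimed inequality is then exactly the Lipschitz bound evaluated at the two pairs $x := (\state_i,\goal_i)$ and $y := (\state_j,\goal_j)$. The two ingredients I would use are (i) the goal-conditioned Bellman equation $V^\pi(\state,\goal) = \mathcal{R}(\state,\pi(\state,\goal),\goal) + \gamma\,\mathbb{E}_{\state'\sim\mathcal{P}^\pi_{\state}}[V^\pi(\state',\goal)]$, which holds because the goal is held fixed throughout an episode, and (ii) the Kantorovich--Rubinstein duality, which states that for any $1$-Lipschitz $f$ one has $|\mathbb{E}_{\mu}[f]-\mathbb{E}_{\nu}[f]|\le W_1(\mu,\nu)$ measured in the underlying metric.

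Because a direct application of these two facts is circular (the Wasserstein step already presumes $V^\pi$ is $1$-Lipschitz), I would instead run a coupled induction along the two iterative processes that define $V^\pi$ and $d^\pi$. Let $V_0\equiv 0$ and $V_{n+1}(\state,\goal) = \mathcal{R}(\state,\pi(\state,\goal),\goal) + \gamma\,\mathbb{E}_{\state'\sim\mathcal{P}^\pi_{\state}}[V_n(\state',\goal)]$ be the policy-evaluation iterates, and let $d_0\equiv 0$, $d_{n+1}=\mathcal{F}(d_n,\pi)$ be the metric iterates from \cref{prop:goalbisim_fixed_point}. I claim $|V_n(x)-V_n(y)|\le d_n(x,y)$ for all $n$. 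The base case is immediate. For the inductive step I would split $|V_{n+1}(x)-V_{n+1}(y)|$ by the triangle inequality into a reward term and a discounted expected-next-value term; the reward term matches the first summand of $d_{n+1}$, while the induction hypothesis makes $V_n$ a $1$-Lipschitz function w.r.t. $d_n$, so the duality bounds the second term by $W_1(d_n)(\mathcal{P}^\pi_{\state_i},\mathcal{P}^\pi_{\state_j})$, i.e. the second summand of $d_{n+1}$. Summing gives $|V_{n+1}(x)-V_{n+1}(y)|\le d_{n+1}(x,y)$. Passing to the limit, $V_n\to V^\pi$ by the contraction of policy evaluation and $d_n\to d^\pi$ by \cref{prop:goalbisim_fixed_point}, which yields the proposition.

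The main obstacle is making the goal-conditioned lifting of the Wasserstein term precise, since $\mathcal{P}^\pi_{\state_i}$ and $\mathcal{P}^\pi_{\state_j}$ are distributions over next \emph{states}, whereas $d^\pi$ and $V^\pi$ live on state-goal \emph{pairs}. I would resolve this by reading $W(d)(\mathcal{P}^\pi_{\state_i},\mathcal{P}^\pi_{\state_j})$ as the Wasserstein distance between the pushforwards $\state_i'\mapsto(\state_i',\goal_i)$ and $\state_j'\mapsto(\state_j',\goal_j)$ under the pair pseudometric $d$, which is the natural interpretation once the goal is recognized as constant along the rollout; the induction hypothesis is accordingly stated for the function $(\state,\goal)\mapsto V_n(\state,\goal)$ with the goal carried through the transition. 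A secondary bookkeeping point is that the clean term-by-term matching requires the reward/transition weighting $1$ and $\gamma$ of \cref{eq:gcbm} rather than the weights $1-c$ and $c$ of \cref{prop:goalbisim_fixed_point}, and it uses $W_1$ (for which Kantorovich duality applies to $1$-Lipschitz functions) in place of the $\mathcal{W}_2$ written in \cref{eq:gcbm}. With these conventions fixed, each summand of $d_{n+1}$ aligns with the corresponding bound and the remainder is routine.
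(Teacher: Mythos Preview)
Your argument is correct, but it takes a different route from the paper. The paper's proof is a one-line reduction: it observes that a GCMDP can be viewed as an ordinary MDP on the product state space $\mathcal{S}':=\mathcal{S}\times\mathcal{G}$ (with the goal a static component of the augmented state and reward $\mathcal{R}'=\mathds{1}(s'=(g,g))$), and then simply invokes the existing value-function bound for on-policy bisimulation metrics on standard MDPs (Proposition~4.8 of Castro et al., 2021). Your coupled induction on $(V_n,d_n)$ is instead a direct, self-contained proof of that cited result specialized to the goal-conditioned setting. The two approaches are in fact the same construction seen from different angles: your ``pushforward'' reading of $W(d)(\mathcal{P}^\pi_{\state_i},\mathcal{P}^\pi_{\state_j})$ as a distance between distributions on $(\state',\goal)$-pairs is precisely the transition kernel of the paper's super-MDP. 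What the paper's route buys is brevity; what yours buys is transparency about the mechanics (the Kantorovich--Rubinstein step, and the need for the $(1,\gamma)$ weighting and $W_1$ rather than the $(1-c,c)$ and $\mathcal{W}_2$ that appear in \cref{prop:goalbisim_fixed_point} and \cref{eq:gcbm}), which the paper glosses over by delegating to the cited proposition.
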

Proof in \cref{app:proofs}.
For our learned policy $\pi$, this bounds the performance difference of it being applied to a new task denoted by state-goal pair $(s_j,g_j)$, given its performance on a known task $(s_i,g_i)$. Another way to interpret \cref{prop:value_bound} is as an intuitive result similar to Bellman's~\citep{Bellman:DynamicProgramming} policy improvement theorem and the extension in \citet{barreto_successor_2017}. \emph{If two tasks have a small distance in $d^\pi$, the policy $\pi$ will exhibit similar performance when deployed on them.}

We now show a novel result that the representations learned by $\psi$ and $\phi$ are sufficient for any downstream task, not just goal-reaching tasks.
\begin{proposition}\label{prop:downstream}
Assume that each state-action pair is visited infinitely often and the step-size parameters decay appropriately. Also assume that the goal space is the same as the state space, such that $\mathcal{G} := \mathcal{S}$. Learning with abstraction $\psi$ for any reward function that can be expressed as $\mathcal{R}:\mathcal{S} \mapsto \mathbb{R}$ converges to the optimal state-action value function in the original MDP.
\end{proposition}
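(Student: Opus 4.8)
The plan is to show that the learned state encoder $\psi$ induces a \emph{model-irrelevance} (bisimulation) abstraction of $\mathcal{S}$, and then to invoke the standard convergence guarantee for tabular $Q$-learning run over such an abstraction. The first step ties $\psi$ to the goal-conditioned bisimulation relation through the analogy identity in \cref{eq:analogy}. Suppose $\psi(\state_i)=\psi(\state_j)$. Then for every goal $\goal\in\mathcal{G}$ we have $\phi(\state_i,\goal)=\psi(\goal)-\psi(\state_i)=\psi(\goal)-\psi(\state_j)=\phi(\state_j,\goal)$, so the two states produce identical paired embeddings against every goal. Because \cref{prop:goalbisim_fixed_point} guarantees that $\phi$ converges to an embedding realizing the goal-conditioned $\pi$-bisimulation metric $d^\pi$, equality of embeddings gives $d^\pi(\state_i,\goal;\state_j,\goal)=0$ for all $\goal$; equivalently, the tasks $(\state_i,\goal)$ and $(\state_j,\goal)$ are goal-conditioned bisimilar for every $\goal$, so they satisfy both conditions of \cref{def:gc_bisim}.

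I would then use the hypothesis $\mathcal{G}:=\mathcal{S}$ to extract strong structural consequences. Reading the sparse-reward condition of \cref{def:gc_bisim} in expectation, goal-conditioned bisimilarity of $(\state_i,\goal)$ and $(\state_j,\goal)$ for \emph{every} $\goal\in\mathcal{S}$ means $\mathcal{P}(\state'=\goal\mid\state_i,\action)=\mathcal{P}(\state'=\goal\mid\state_j,\action)$ for all $\goal$ and all $\action$, i.e.\ $\state_i$ and $\state_j$ have identical next-state distributions $\mathcal{P}(\cdot\mid\state_i,\action)=\mathcal{P}(\cdot\mid\state_j,\action)$. Adopting the natural reward-on-arrival convention in which a state-only reward $\mathcal{R}:\mathcal{S}\mapsto\mathbb{R}$ contributes $\mathbb{E}_{\state'\sim\mathcal{P}(\cdot\mid\state,\action)}[\mathcal{R}(\state')]$, the one-step expected reward is a linear functional of the next-state distribution and is therefore preserved whenever $\psi(\state_i)=\psi(\state_j)$. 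Equivalently, any state-only reward decomposes as $\mathcal{R}(\state')=\sum_{\goal}\mathcal{R}(\goal)\,\mathds{1}(\state'=\goal)$ over the sparse goal rewards, and the per-goal equivalence above combines linearly to give reward preservation. Together with the matched transitions, this verifies both the reward-preservation and transition-preservation conditions of a model-irrelevance abstraction, for \emph{every} state-only reward.

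With $\psi$ established as a model-irrelevance abstraction, the abstract MDP whose states are the classes of $\psi$ is well-posed: its transitions over classes and its expected rewards are independent of the representative ground state. I would then invoke the abstraction theory of \citet{li2006stateabs}, by which the optimal action-value function of this abstract MDP lifts to the optimal $Q^\ast$ of the ground MDP, and under the stated assumptions---every state--action pair visited infinitely often and Robbins--Monro step sizes---tabular $Q$-learning over the abstract states converges to $Q^\ast$. This yields the claimed convergence to the optimal state-action value function of the original MDP for any state-only reward.

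The step I expect to be the main obstacle is making the $\mathcal{G}:=\mathcal{S}$ reward argument rigorous in the continuous, image-based setting the paper targets. There the sparse reward $\mathds{1}(\state'=\goal)$ and the ``decomposition over all goals'' must be recast measure-theoretically (densities, integrals, and integrability of $\mathcal{R}$ against the matched kernels) rather than as finite sums, and one must confirm the reward-on-arrival convention rather than reward-on-current-state, since the model-irrelevance reward condition is cleanest when the reward is pushed through the transition kernel. A secondary concern is that the convergence result of \citet{li2006stateabs} is tabular, so the argument implicitly requires the abstraction to induce a finite (or otherwise well-behaved) abstract state set on which the stochastic-approximation hypotheses are meaningful.
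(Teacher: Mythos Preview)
Your approach is genuinely different from the paper's. The paper's proof is a two-line sketch: it invokes \cref{re:rew_fns} to write any state-only reward as a tabular combination $\{\alpha_i s_i\}$ of goal-reaching rewards, and then asserts that the optimal policy for any such reward ``can be expressed as a series of state-reaching policies'' lying within the family of goal-conditioned policies when $\mathcal{G}=\mathcal{S}$. It does not explicitly argue through model-irrelevance or invoke a $Q$-learning convergence theorem; the sufficiency of $\psi$ is taken to follow from the fact that $\psi$ (via $\phi$) supports every goal-reaching task. Your route---show that $\psi$ collapses only states with identical transition kernels, hence is a model-irrelevance abstraction, then cite \citet{li2006stateabs}---is more structured and, if it went through, would give a cleaner justification for the precise convergence claim in the proposition.

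However, there is a genuine gap at the point where you pass from $d^\pi(\state_i,\goal;\state_j,\goal)=0$ to ``they satisfy both conditions of \cref{def:gc_bisim}.'' \cref{prop:goalbisim_fixed_point} and the training objective for $\phi$ concern the \emph{on-policy} metric $d^\pi$: zero distance under $d^\pi$ means the rewards and next-state distributions agree \emph{under the actions selected by $\pi$}, not for all $\action\in\mathcal{A}$. \cref{def:gc_bisim}, by contrast, is a $\forall\action$ relation. Your subsequent step---deducing $\mathcal{P}(\state'=\goal\mid\state_i,\action)=\mathcal{P}(\state'=\goal\mid\state_j,\action)$ for all $\goal$ and all $\action$---explicitly uses that universal quantifier over actions, which you have not established. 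Without it you cannot conclude model-irrelevance (which is itself a $\forall\action$ condition), and the appeal to \citet{li2006stateabs} for convergence to $Q^\ast$ does not apply: a $\pi$-bisimulation abstraction need not preserve $Q^\ast$ when $\pi$ is suboptimal. To repair this you would need an additional argument that the learned $\phi$ in fact realizes the full (action-universal) goal-conditioned bisimulation, or else bypass model-irrelevance entirely and argue sufficiency in the looser sense the paper adopts.
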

Intuitively, \cref{prop:downstream} tells us that if our goal space is sufficiently rich, i.e., our agent learns to access every possible state in the MDP, then the information in $\psi$ and $\phi$ is sufficient to train an optimal policy for any task that can be described by a state-only reward function. 

Furthermore, we can extend these results to any downstream reward function. To do so, we first note that:
\begin{remark}\label{re:rew_fns}
For a discrete state space $\mathcal{S}$ with cardinality $|\mathcal{S}|$, any reward function $\mathcal{R}:\mathcal{S} \mapsto \mathbb{R}$ can be written as a tabular combination of states in the form $\mathcal{R}=\{\alpha_i s_i\}_{i=0}^{|\mathcal{S}|}$, where $\alpha_i\in\mathbb{R}, \forall i$, and therefore there also exists a representation of that state space (such as one-hots) for which the reward can be written as a linear combination of states. 
\end{remark}
Then, we can also bound the value difference between two states $\state_i,\state_j\in\mathcal{S}$ for a given policy $\pi$ and for any reward function as follows:
\begin{proposition}\label{prop:value_bound_anyr} 
For any reward function $R\in\mathcal{R}$ as defined above and a given policy $\pi$,
\begin{equation}
\vspace{-5pt}
    |V_R^\pi(\state_i) - V_R^\pi(\state_j)| \leq \sum_{k=0}^{|\mathcal{S}|} \alpha_k d^\pi(\state_i,\goal_k; \state_j,\goal_k),
\vspace{-5pt}
\end{equation}
where $V^\pi_R$ denotes the value function for a given policy $\pi$ and reward function $R$.
\end{proposition}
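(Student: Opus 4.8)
The plan is to reduce this to \cref{prop:value_bound} using two facts: the decomposition of $R$ from \cref{re:rew_fns}, and the linearity of the fixed-policy value function in the reward. First I would rewrite the state-only reward in tabular form. Since \cref{prop:downstream} assumes $\mathcal{G} := \mathcal{S}$, each basis term $\mathds{1}(\state = s_k)$ is precisely the sparse goal-reaching reward $\mathcal{R}(\cdot,\cdot,\goal_k)$ for the goal $\goal_k = s_k$. Hence $R = \sum_{k=0}^{|\mathcal{S}|} \alpha_k \mathcal{R}_{\goal_k}$ is a linear combination of exactly the goal-conditioned reward functions that \cref{prop:value_bound} already controls.

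Next I would invoke linearity of the value function in the reward. Holding a single policy $\pi$ fixed, the Bellman expectation operator $T^\pi V = R + \gamma \mathcal{P}^\pi V$ is affine in $R$, so its unique fixed point $V_R^\pi = (I - \gamma \mathcal{P}^\pi)^{-1} R$ is a linear function of $R$. Feeding in the decomposition above gives
\[
    V_R^\pi(\state) = \sum_{k=0}^{|\mathcal{S}|} \alpha_k\, V^\pi(\state, \goal_k),
\]
where $V^\pi(\cdot,\goal_k)$ is the value of the fixed policy $\pi$ measured against the goal-$\goal_k$ reward, i.e.\ exactly the quantity appearing in \cref{prop:value_bound}. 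Evaluating at $\state_i$ and $\state_j$, subtracting, and applying the triangle inequality yields
\[
    |V_R^\pi(\state_i) - V_R^\pi(\state_j)| \le \sum_{k=0}^{|\mathcal{S}|} \alpha_k\, |V^\pi(\state_i,\goal_k) - V^\pi(\state_j,\goal_k)|,
\]
and I would then bound each summand by $d^\pi(\state_i,\goal_k;\state_j,\goal_k)$ directly from \cref{prop:value_bound}, giving the claim.

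The main obstacle is keeping the policy consistent across the decomposition. The linearity step requires a \emph{single} induced Markov chain, so $\pi$ must be held fixed across all $k$ rather than re-optimized per goal; under this reading, each $V^\pi(\cdot,\goal_k)$ uses the same policy $\pi$ on both $\state_i$ and $\state_j$, which is exactly the regime covered by \cref{prop:value_bound} when the same goal $\goal_k$ appears on both sides. I would therefore state explicitly that $\pi$ is one fixed policy and that $V^\pi(\cdot,\goal_k)$ denotes its value under the goal-$\goal_k$ reward. A secondary subtlety is the sign of the coefficients: the triangle inequality actually produces $\sum_k |\alpha_k|\, d^\pi(\cdots)$, so the stated bound is recovered exactly when the $\alpha_k$ are nonnegative (e.g.\ when $R \ge 0$); otherwise $\alpha_k$ should be read as $|\alpha_k|$. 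I would flag this nonnegativity assumption, or insert absolute values, to make the final inequality rigorous.
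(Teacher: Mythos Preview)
Your proposal is correct and follows essentially the same route as the paper: decompose $V_R^\pi$ as $\sum_k \alpha_k V^\pi(\cdot,\goal_k)$, apply the triangle inequality, and bound each summand via \cref{prop:value_bound}. The paper handles the sign issue exactly as you suggest, by assuming $\alpha_k \ge 0$ without loss of generality at the outset; your additional remarks on why linearity of $V^\pi$ in the reward holds and on keeping a single fixed policy across all $k$ are more explicit than the paper's terse assertion of the decomposition, but the argument is the same.
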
 %
Thus, we have shown that not only is our representation learning objective sufficient for any downstream task, but is structured in a way that is amenable for \emph{any} state-only reward function. One can view our method as a general, reward-agnostic representation learning scheme that exhibits nice properties for any downstream task.   
We now show the empirical capabilities of our method in \cref{sec:exps}.

\section{Experiments and Results} 
\label{sec:exps}

We evaluate and compare \ouracronym to other representation learning methods in the goal-conditioned setting. A central premise behind the design of \ouracronym is that it can capture functional equivariance over different tasks, as described by state-goal pairs.
Our experiments are therefore designed to answer the following questions: 1) Does \ouracronym truly capture these types of analogies?
2) Can \ouracronym use demonstration state-goal pairs to infer goals for novel states and successfully complete those tasks? 
3) Can \ouracronym also achieve strong performance on standard, offline goal-conditioned tasks?

\subsection{Baselines and Prior Methods}
We compare against representation learning methods following the major trends described in \cref{sec:related_work}. We evaluate two contrastive approaches: a temporal contrastive method (CPC)~\citep{oord2018cpc} and a data augmentation contrastive method (CURL)~\citep{laskin_srinivas2020curl}. 
We also compare to reconstruction-based methods that use a VAE and context-conditioned VAE (CC-VAE~\citep{nair19ccrig}).
Finally, we evaluate a pure data augmentation method, RAD~\citep{laskin2020rad}. All of the above representation learning methods produce single state representations, which we compare to our $\psi$ representation directly. 
As an additional ablation, we can also evaluate the paired state representation $\phi$ we learn as an intermediate objective, which corresponds to a na\"{i}ve goal-conditioned form of DBC~\citep{zhang2021dbc} which we denote by GC-DBC. 

We also include another baseline that leverage compositionality by arithmetic sums over vectors, compositional plan vectors (CPV, \cite{devin2019cpv}). However, CPVs are evaluated in a one-shot imitation learning setting which assumes access to paired reference trajectories. CPV uses an architecture $\pi(s_t, v_t)$ where $v_t = g(o_0, o_T) - g(o_0, o_t)$, where $(o_0, o_t, o_T)$ are drawn from the reference trajectory in one-shot imitation learning to train $\pi(a_t|o_t)$. Our method does not have access to paired reference trajectories. Instead, we train the CPV architecture $\pi(s_t, v_t)$ with $(o_0, o_t, o_T)$ being drawn from the current trajectory. This allows us to learn a compositional latent space for RL. 

\begin{table*}[t]
    \centering
    \begin{tabular}{c|c|c|c|c|c|c}
        Method & \texttt{Drawer} &  \texttt{Drawer+VD} & \texttt{BD} & \texttt{BD+VD} & \texttt{Analogy} & \texttt{Analogy+VD} \\
        \hline
        GCB (ours) & $\mathbf{0.533 \pm 0.037}$ & $\mathbf{0.448 \pm 0.033}$ & $0.440 \pm 0.026$ & $\mathbf{0.322 \pm 0.021}$ & $\mathbf{0.403 \pm 0.041}$ & $\mathbf{0.303 \pm 0.027}$\\
        CPV & $0.400 \pm 0.031$ & $0.209 \pm 0.013$ & $0.228 \pm 0.024$& $0.198 \pm 0.013$  & $0.176 \pm 0.011$& $0.121 \pm 0.009$\\
        GC-DBC & $0.269 \pm 0.025$ & $0.261 \pm 0.031$ &$0.358 \pm 0.023$ & $0.255 \pm 0.034$  & NA & NA\\
        CCVAE & $0.443 \pm 0.029$  & $0.231 \pm 0.017$ & $0.452 \pm 0.039$& $0.225 \pm 0.028$  & $0.234 \pm 0.012$& $0.095 \pm 0.009$\\
        VAE & $0.463 \pm 0.023$ & $0.261 \pm 0.018$ & $\mathbf{0.522 \pm 0.042}$& $0.115 \pm 0.019$  & $0.280 \pm 0.024$& $0.112 \pm 0.006$\\
        CPC & $0.305 \pm 0.020$ & $0.207 \pm 0.014$ & $0.363 \pm 0.042$& $0.154 \pm 0.023$  & $0.148 \pm 0.026$& $0.145 \pm 0.029$\\
        CURL & $0.259 \pm 0.021$ & $0.283 \pm 0.030$ & $0.382 \pm 0.038$& $0.198 \pm 0.022$  & $0.162 \pm 0.021$& $0.123 \pm 0.025$\\
        RAD & $0.245 \pm 0.019$ & $0.207 \pm 0.010$ & $0.159 \pm 0.036$& $0.108 \pm 0.018$ & $0.157 \pm 0.028$& $0.110 \pm 0.026$\\
        Pixel & $0.483 \pm 0.051$ & $0.256 \pm 0.027$ & $0.117 \pm 0.016$& $0.136 \pm 0.027$ & $0.132 \pm 0.025$& $0.121 \pm 0.033$\\
    \end{tabular}
    \caption{Final avg. success rates with stderr over 5 seeds. \texttt{BD} refers to the \texttt{Button and Drawer} env. and \texttt{VD} refers to Video Distractors.}
    \label{tab:results}
\end{table*}

\subsection{Simulation Manipulation Experiments}
We evaluate \ouracronym on complex manipulation tasks in a PyBullet simulated environment \citep{coumans2021} consisting of randomly generated workspaces, as shown in \cref{fig:pybullet_example}. 
We focus on the offline RL setting to decouple the exploration difficulties of this environment from the learning difficulties on various representations. Each initial state in this environment contains a randomized variety of objects and a randomly positioned drawer assembly, providing a variety of possible goals and distractors to evaluate functional equivariance and generalization.

\begin{wrapfigure}[10]{r}{0.33\linewidth}
  \centering
  \vspace{-20pt}
  \includegraphics[width=1\linewidth]{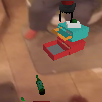}
  \vspace{-20pt}
  \caption{\texttt{Button and Drawer} env with video distractors.}
  \vspace{-5pt}
  \label{fig:pybullet_example}
\end{wrapfigure}

In the \texttt{Drawer} environment, a robot must learn to position a drawer in a target goal location, while in the other \texttt{Button and Drawer} environment, which consists of two stacked drawers with a button on top, the task of the robot is either to position the top drawer correctly or press the button which opens or closes a bottom drawer compartment. More details of the environment are in Appendix \ref{app:env_details}.
We create a distribution over tasks by randomizing the position, color, and orientation of the drawers. Additionally, we include between 0 and 4 distractor objects from a set of 84 object geometries, and in some experiments we also overlay distracting real video in the background~\citep{zhang2018natrl} from the Kinetics dataset~\citep{kineticsdataset}. %
To collect the offline replay buffer, we use a noisy expert policy $\pi_{demo}(a_i | s, g) = \pi^*(a_i | s, g) + \mathcal{N}(0, 0.3)$ where $ -1 \leq a_i \leq 1 \ \forall \ i$.  50K transitions are collected for the training set for the following offline RL experiments, where the demonstrated policy reaches the goal in roughly 80\% of attempted episodes. One epoch defines one full pass through the dataset. These environments were designed to contain several types of variation to showcase the different types of generalization \ouracronym is capable of.

 \begin{figure}[t]
  \centering
  \includegraphics[width=.99\linewidth]{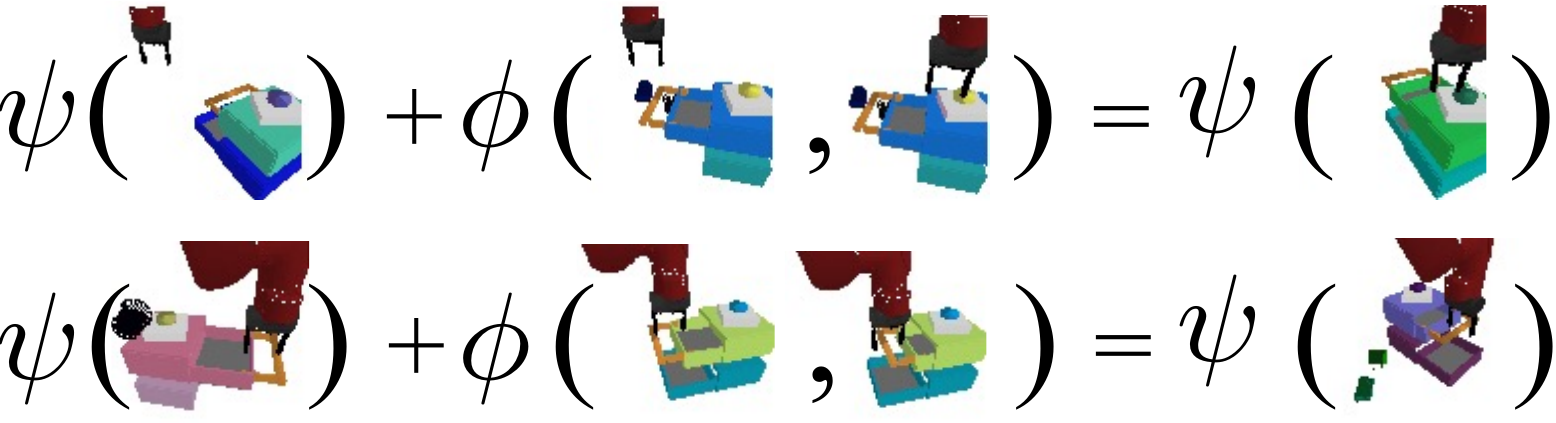}
 \vspace{-10pt}
  \caption{Two examples of \textbf{analogy arithmetic.} The rightmost image is the nearest neighbor in a test set of the composed representation in $\psi$ space. Top: RHS has the finger pushing the button, just like in the goal argument to $\phi$, but in a different position and orientation, indicating that the representation is equivariant to pose but captures the functional relevance of pushing the button.  Bottom: RHS has the drawer being closed, again matching function with the goal argument to $\phi$ while exhibiting invariance to color and equivariance to pose.}
  \label{fig:analogy_arith}
\end{figure}

\subsubsection{\ouracronym Makes Analogies}
\label{sec:analogy_experiments}
In this section, we probe our learned representations $\phi$ and $\psi$ to determine if they have the properties they were designed to have. In \cref{fig:phi_nn} we sample random tasks, encode them with $\phi$, and sample their nearest neighbors from a small subset of our offline dataset. According to \cref{def:gc_bisim}, these tasks being near each other in $\phi$ space means they are functionally similar. In comparing the sampled state-goal pairs, we see that \ouracronym successfully captures \emph{invariance}: features that are irrelevant to the task change across the sample and nearest neighbor in $\phi$ space, such as the color of the drawer and what distractor objects are present in the background. We show a similar result in \cref{fig:tsne} but with the space of $\phi$ shown in a t-SNE plot. We again see that randomly sampled points in this space exhibit similar semantic tasks. For \cref{fig:analogy_arith}, \cref{fig:phi_nn}, \cref{fig:tsne}, and \cref{fig:analogy_results} the images shown are the ones given directly to the encoders $\psi$ and $\phi$.
This visualization shows how the $\phi$ representation groups state-goal pairs and gives justification that \ouracronym can make meaningful analogies. In our next visualization, we will show that $\psi$ can exhibit \emph{functional equivariance.}

\begin{figure}[h]
  \centering
  \vspace{-5pt}
  \includegraphics[width=1\linewidth]{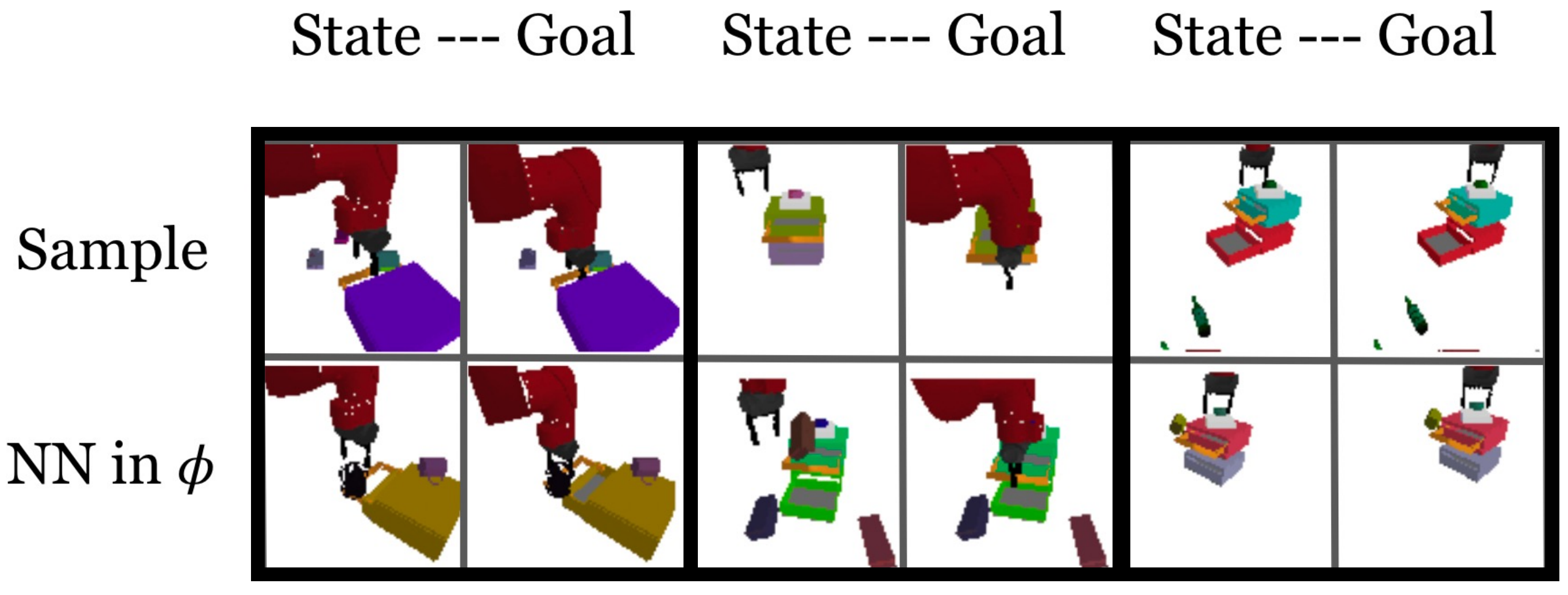}
  \vspace{-20pt}
  \caption{Visualization of nearest neighbors in $\phi$. We sample state-goal pairs and find their NN in $\phi$ space. Features irrelevant to the task have changed, such as drawer color and distractor objects, but task-specific components are invariant, such as the relative positioning of robot to target object and semantics of the task.} 
  \label{fig:phi_nn}
\end{figure}

\begin{figure}[h]
  \centering
  \vspace{-5pt}
  \includegraphics[width=1\linewidth]{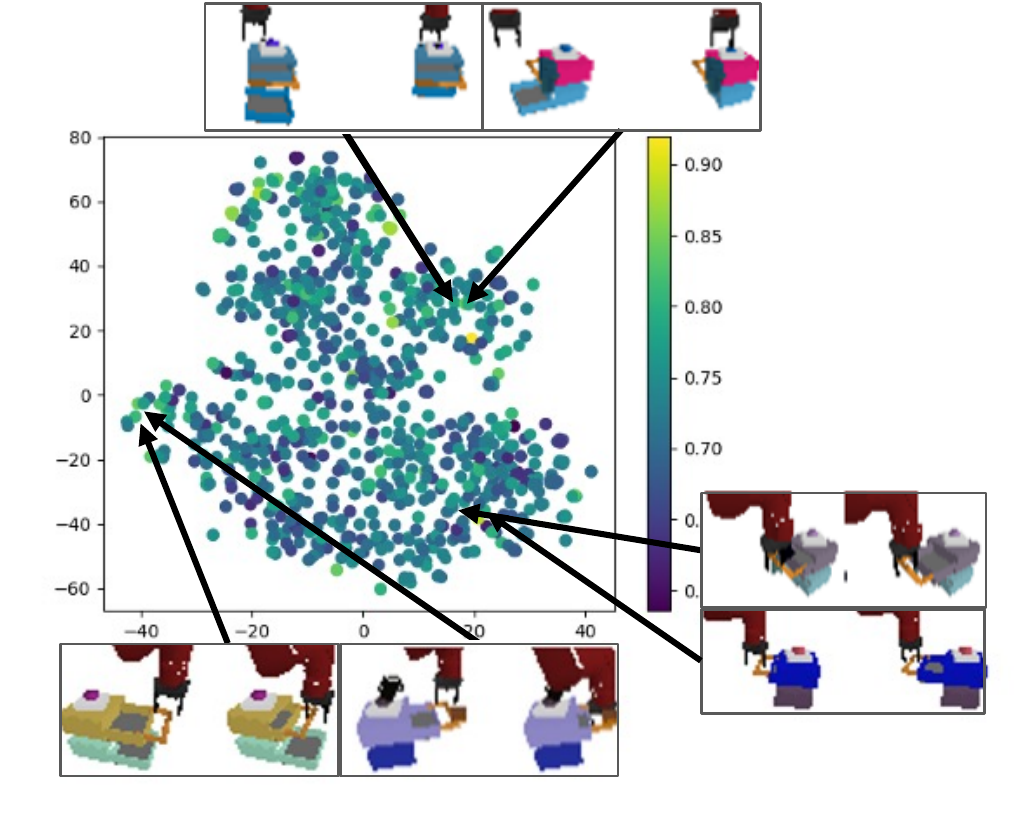}
  \caption{Visualization of nearest neighbors in $\phi$ as a t-SNE plot. Color corresponds to the predicted value output by the goal-conditioned critic.} 
  \label{fig:tsne}
\end{figure}

While \cref{fig:phi_nn} and \cref{fig:tsne} show that our $\phi$ learning objective successfully groups equivalent tasks, we must now show that $\psi$ is capable of \emph{functional equivariance}, where pertinent factors of variation carry over across tasks. To test this, we compose example state-goal pairs with new states to describe new goals, in the manner shown in \cref{fig:teaser_analogy}. 
In \cref{fig:analogy_arith} we sample random states $s\in\mathcal{S}$ and an analogous state/goal ($s_a, g_a$) pair that is functionally similar but has a potentially different color drawer, background distractors, orientation, and/or position. We encode the state and analogous state/goal, and add them to produce $\psi(s) + \phi(s_a, g_a)$. We then sample the nearest neighbor of this point in $\psi$ space to find the closest representation of the desired goal. We find that $\psi$ is able to take the analogous task and produce a reasonable goal for the current environment. If the analogous problem results in pressing the button, $\psi$ then selects a similar state where the button is pressed (\cref{fig:analogy_arith}, top), \emph{maintaining the same orientation as the current initial state}.  If it results in closing the drawer, the goal produced by $\psi$ also corresponds to a state where the drawer is being closed (\cref{fig:analogy_arith}, bottom).
This indicates that the space is able to pick up on analogies, by representing tasks in an equivariant way such that applying the representation of a task from one initial state to another results in an analogous transformation to that state\footnote{We do not see the exact goal that matches the given initial state because it is not part of the dataset we sample from.}.

As discussed in \cref{sec:analogies}, we can use analogies to describe goals for tasks where we do not have access to the true intended goal. \cref{fig:analogy_arith} constitutes a qualitative check that this is likely true, and we design the following experiment to quantitatively measure this ability. We collect an evaluation set where each initial state is paired with an analogous state-goal pair $(s_a, g_a)$, where factors that the agent should be invariant or functionally equivariant to vary, similar to \cref{fig:analogy_arith}. 
The policy is trained with $\pi(\psi(s), \phi(s, g))$, but is evaluated with $\pi(\psi(s), \phi(s_a, g_a))$. We focus on the \texttt{Button and Drawer} environment as it has a richer space of goals --- the policy has to use information from $s_a, g_a$ to infer the correct goal. Success in this environment requires representations that encode important task information across visually different tasks.

As quantitative evidence of \ouracronym's ability to use analogies, we see in \cref{tab:results} that
 \ouracronym remains the best performer, achieving 40\% success on average. 
This shows that \ouracronym can map analogous problems to the current environment, successfully reaching goals that it was never given goal images of --- rather only an analogous task as an example. 
In the next section, we see if these properties also help \ouracronym on standard goal-conditioned RL tasks.

\subsubsection{Comparisons on goal-reaching}
\label{sec:benchmark_experiments}
We have shown that \ouracronym exhibits invariance and functional equivariance across tasks, which leads to superior performance in a task setting where the agent is given an analogous task as a goal descriptor rather than the specified goal image at evaluation time. Now we  ask, does \ouracronym also lead to improved performance on standard goal-reaching tasks? 
We evaluate in the standard goal-conditioned setting on new, unseen environments where we provide the policy $\pi(s,g)$ the intended true goal image $g$. 

In the standard goal setting without video distractors, GCB is able to outperform all other representation methods on the \texttt{Drawer} benchmark achieving 50\%+ success, as seen in \cref{tab:results}. In \texttt{Button and Drawer}, GCB maintains strong performance, beating all representations except for VAE and CCVAE, where it remains competitive. When real video distractors are added, GCB performs the best in both tasks. 
The performance of GC-DBC shows the necessity of the compositional single state representation, $\psi$ --- without it, we are not able to achieve as strong performance on new, unseen environments, as was noted in \cref{sec:concepts}.

\section{Discussion}
This paper introduces goal-conditioned bisimulation (\ouracronym), a representation learning method for goal-conditioned RL problems that captures functional equivariance over a family of goal-conditioned tasks. We have shown that, at evaluation time, an agent can use representations of analogous tasks
to achieve unseen goals and even outperform existing representation learning methods in the standard goal-conditioned setting. Furthermore, our method is theoretically backed by an equivalence relation that defines what ``functional similarity'' means. The representations learned by our method are structured,
in that $\ell_1$ distance in the space bounds the value difference of different states and tasks. This holds true not only for goal-conditioned problems, but all downstream tasks describable by a state-only reward function. 

While we show improved success rates in goal-conditioned tasks and improved generalization capability on unseen, in-distribution environments, it is possible that we can broaden the definition of functional equivalence beyond that of goal-conditioned bisimulation. As an example, one limitation of \ouracronym is that the number of steps the policy needs to complete the task is a determining factor in how close two tasks are. While it is not entirely clear how we can broaden this equivalence relation while maintaining some semantic meaning of \emph{function}, the potential of such an abstraction is plain. We can define equivalences across tasks in different domains and action spaces, allowing agents to learn from third-person demonstrations and unlocking the potential of large, off-domain datasets.

\section*{Acknowledgements}
This research was supported by a Meta AI-BAIR Commons project, AN is supported by the Army Research Office under W911NF-21-1-009, and the Office of Naval Research, with compute support from Google Cloud, Berkeley Research Computing, Meta, and Azure.

\bibliographystyle{plainnat_ours}
\bibliography{ref}

\newpage
\appendix
\onecolumn
\section{Extra Visualizations}
We add extra Nearest Neighbors visualizations for our arithmetic space. These follow the same logic as shown in Figure \ref{fig:analogy_arith}. The analogies are sampled by randomizing color, distractor objects, video distractors (if present), and translating and rotating the drawer. We provide more details about how we sample these analogies in \cref{sec:analogy_experiments}.
\begin{figure}[h]
  \centering
  \includegraphics[width=0.48\linewidth]{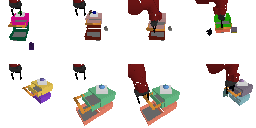} \\
  \includegraphics[width=0.48\linewidth]{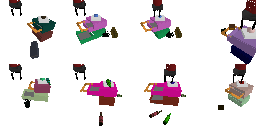} \\
  \includegraphics[width=0.48\linewidth]{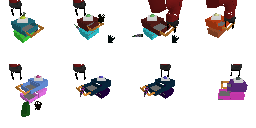} \\
  \includegraphics[width=0.48\linewidth]{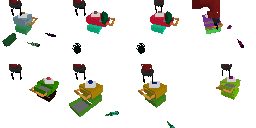}
  \caption{More examples of \textbf{analogy arithmetic.} The leftmost image is a sampled start state. The next two images are a sampled analogy start and goal. The rightmost image is the nearest neighbor in a test set of the composed representation in $\psi$ space. Each row is a new sample, the second to last row is an example of a failure.}
  \vspace{-20pt}
  \label{fig:analogy_results}
\end{figure}

\section{Extended Related Work}
\label{app:related_work}

\paragraph{Goal-Conditioned RL.} 
Goal-conditioned RL learns a policy that can generalize to many reward functions, indexed by goals, rather than a single reward function~\cite{kaelbling1993goals, schaul2015uva, andrychowicz2017her, nachum2018hiro, eysenbach2020rewriting}.
For environments with rich, high-dimensional observations such as images, methods have proposed to use existing generative models for encoding observations and goals~\cite{nair2018rig, nair19ccrig, khazatsky2021val}. Using reconstructive generative models for the input representation compresses the state space by encoding appearance information, but does not relate to functionality.
Goal-conditioned RL has been used to learn general policies in the real world on large, diverse datasets~\cite{chebotar2021actionable}. Thus making goal-conditioned RL more sample efficient and performant is a practical contribution towards real-world reinforcement learning. Non-parametric methods for exploration and planning with images have been considered~\cite{wadefarley2019discern, nasiriany2019leap, wang2020ropes, liu2020ropes, shah2021rapid} and could also benefit from improved representation learning.
\citet{han2021learning} address the generalization problem of goal-conditioned policies to novel observations at test time by using an alignment objective for data collection across environments. Our method instead naturally finds alignment through an equivalence relation. 

\section{Proofs of Theoretical Results}
\label{app:proofs}
\begin{theorem}\label{thm:pi_castro}
 Define $\mathcal{F}^{\pi}:\mathcal{M}\rightarrow\mathcal{M}$ by
  $\mathcal{F}^{\pi}(d)(s, t) = |\mathcal{R}^{\pi}_s - \mathcal{R}^{\pi}_{t}| + \gamma
  \mathcal{W}_1(d)(\mathcal{P}^{\pi}_s, \mathcal{P}^{\pi}_{t})$, 
  then $\mathcal{F}^{\pi}$ has a least fixed point $d^{\pi}$, and
  $d^{\pi}$ is a $\pi$-bisimulation metric.
\end{theorem}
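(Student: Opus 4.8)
The plan is to recognize this as the standard Banach fixed-point argument used to establish bisimulation metrics, adapted to the on-policy setting following \citet{ferns2004bisimulation,castro20bisimulation}. First I would equip the space of bounded pseudometrics on $\mathcal{S}$ with the supremum metric $\rho(d,d')=\sup_{s,t}|d(s,t)-d'(s,t)|$ and argue that this space is complete: a uniform limit of pseudometrics is again symmetric, nonnegative, vanishes on the diagonal, and satisfies the triangle inequality, so the limit stays in the space. Next I would check that $\mathcal{F}^\pi$ is well-defined, i.e., that $\mathcal{F}^\pi(d)$ is itself a bounded pseudometric whenever $d$ is: symmetry and $\mathcal{F}^\pi(d)(s,s)=0$ are immediate, boundedness follows from bounded rewards together with $\gamma<1$, and the triangle inequality follows from the triangle inequality for $|\cdot|$ combined with the corresponding inequality for $\mathcal{W}_1$.

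The core of the argument is to show $\mathcal{F}^\pi$ is a $\gamma$-contraction in $\rho$. Fixing two pseudometrics $d,d'$, in the difference $\mathcal{F}^\pi(d)(s,t)-\mathcal{F}^\pi(d')(s,t)$ the reward terms are identical and cancel, leaving $\gamma\big(\mathcal{W}_1(d)(\mathcal{P}^\pi_s,\mathcal{P}^\pi_t)-\mathcal{W}_1(d')(\mathcal{P}^\pi_s,\mathcal{P}^\pi_t)\big)$. The key lemma --- and the step I expect to be the main obstacle --- is the non-expansiveness of the Wasserstein distance in its ground metric, namely $|\mathcal{W}_1(d)(\mu,\nu)-\mathcal{W}_1(d')(\mu,\nu)|\le\rho(d,d')$ for all distributions $\mu,\nu$. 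I would prove this via the primal (coupling) formulation: taking an optimal coupling $\lambda$ for $d$ gives $\mathcal{W}_1(d')(\mu,\nu)\le\int d'\,d\lambda\le\int d\,d\lambda+\rho(d,d')=\mathcal{W}_1(d)(\mu,\nu)+\rho(d,d')$, and the symmetric argument yields the reverse bound. Combining these gives $\rho(\mathcal{F}^\pi(d),\mathcal{F}^\pi(d'))\le\gamma\,\rho(d,d')$.

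With the contraction established, the Banach fixed-point theorem yields a unique fixed point $d^\pi$. To identify it as the \emph{least} fixed point, I would note that $\mathcal{F}^\pi$ is monotone in the pointwise order (since $\mathcal{W}_1$ is monotone in its ground metric) and start the iteration from the zero pseudometric $d_0\equiv 0$; because $d_0\le\mathcal{F}^\pi(d_0)$, monotonicity makes $\{(\mathcal{F}^\pi)^n(d_0)\}$ increasing with limit $d^\pi$, and $(\mathcal{F}^\pi)^n(d_0)\le(\mathcal{F}^\pi)^n(d^\ast)=d^\ast$ for any fixed point $d^\ast$ shows $d^\pi$ lies below it. Finally, that $d^\pi$ is a $\pi$-bisimulation metric is exactly the content of its defining fixed-point equation $d^\pi(s,t)=|\mathcal{R}^\pi_s-\mathcal{R}^\pi_t|+\gamma\mathcal{W}_1(d^\pi)(\mathcal{P}^\pi_s,\mathcal{P}^\pi_t)$, which encodes the recursive on-policy condition of matching immediate reward and behaviorally close successor distributions; I would close by observing that its zero set $\{(s,t):d^\pi(s,t)=0\}$ recovers the $\pi$-bisimulation relation, so $d^\pi$ is the desired metric.
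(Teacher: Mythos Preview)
Your proof sketch is correct and follows the standard Banach/contraction argument that underlies this class of results. However, the paper does not actually prove this theorem: it is restated verbatim from \citet{castro20bisimulation} (Theorem~2 there) and invoked as a black box in the proof of Proposition~\ref{prop:goalbisim_fixed_point}. So there is no ``paper's own proof'' to compare against --- you have effectively reconstructed the argument from the cited reference.

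One small remark on your write-up: once you establish that $\mathcal{F}^\pi$ is a $\gamma$-contraction on a complete metric space, Banach gives a \emph{unique} fixed point, so the separate monotonicity argument for ``least'' is redundant (though not wrong). The original bisimulation-metric papers sometimes phrase existence via Knaster--Tarski on the lattice of pseudometrics, where ``least'' is the natural conclusion; the contraction route you take yields uniqueness directly, which is strictly stronger.
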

\begin{propositionnum}[\ref{prop:goalbisim_fixed_point}]
\begin{align*}
\mathcal{F}(d,\pi)(\state_i,\goal_i; \state_j, \goal_j) = 
(1-c)|\mathcal{R}(\state_i, \pi(\state_i,\goal_i), \goal_i) - \mathcal{R}(\state_j, \pi(\state_j,\goal_j), \goal_j)| 
+ c W(d)(\mathcal{P}^{\pi}_{\state_i},\mathcal{P}^{\pi}_{\state_j}). 
\end{align*}
Then $\mathcal{F}$ has a least fixed point $d^\pi$ which is a  goal-conditioned $\pi$-bisimulation metric.
\end{propositionnum}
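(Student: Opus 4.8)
The plan is to follow the structure of \cref{thm:pi_castro} (the on-policy bisimulation fixed point of \citet{castro20bisimulation}), adapting it to the goal-conditioned setting. The essential observation is that, for a \emph{fixed} policy $\pi$, a task $(\state,\goal)$ behaves like an ordinary augmented state in the product space $\mathcal{S}\times\mathcal{G}$: the quantity $\mathcal{R}(\state,\pi(\state,\goal),\goal)$ is a reward attached to this augmented state, and the transition kernel $\mathcal{P}^\pi_\state$ carries the goal along unchanged, inducing a distribution over next augmented states. Under this reinterpretation $\mathcal{F}(\cdot,\pi)$ has exactly the form treated by Castro, with the immediate reward weighted by $(1-c)$ and the Wasserstein term by $c$, so the argument reduces to verifying the standard contraction hypotheses.

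First I would equip $\mathfrak{met}$ with the uniform metric $\|d_1-d_2\|_\infty=\sup|d_1-d_2|$ and recall that this makes the space of bounded pseudometrics complete, since a uniform limit of pseudometrics is again a bounded pseudometric. Next I would check that $\mathcal{F}(\cdot,\pi)$ maps $\mathfrak{met}$ into itself: non-negativity and symmetry are immediate, the diagonal vanishes because both the reward difference and the Wasserstein term are zero when the two tasks coincide, and the triangle inequality is inherited from the absolute value and from the triangle inequality of the Wasserstein metric. Boundedness follows because the sparse reward lies in $\{0,1\}$ and $W$ applied to a bounded metric is bounded.

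The heart of the argument is showing $\mathcal{F}(\cdot,\pi)$ is a $c$-contraction in $\|\cdot\|_\infty$. Since the reward term does not depend on $d$, it cancels in the difference $\mathcal{F}(d_1,\pi)-\mathcal{F}(d_2,\pi)$, leaving only $c\,[W(d_1)-W(d_2)]$ evaluated at the two next-task distributions. I would then invoke the non-expansiveness of the Wasserstein distance in its ground metric, $|W(d_1)(\mu,\nu)-W(d_2)(\mu,\nu)|\le\|d_1-d_2\|_\infty$, which follows from the Kantorovich dual, to obtain $\|\mathcal{F}(d_1,\pi)-\mathcal{F}(d_2,\pi)\|_\infty\le c\,\|d_1-d_2\|_\infty$. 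The Banach fixed-point theorem then yields a unique fixed point $d^\pi$, which is a fortiori the least fixed point; that it is a goal-conditioned $\pi$-bisimulation metric follows by reading off the fixed-point equation, which is precisely the recursion of \cref{eq:gcbm}.

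The main obstacle I anticipate is the bookkeeping around the goal-conditioned product structure rather than any deep new idea: one must confirm that fixing the goal inside the transition kernel still produces a legitimate distribution over augmented states, so that the Wasserstein term is well-defined over $\mathcal{S}\times\mathcal{G}$ and Castro's non-expansiveness estimate transfers verbatim. A secondary point to address is the ``continuously improving'' policy hypothesis: the contraction argument treats $\pi$ as fixed and returns the fixed point for that $\pi$, so I would note that an improving sequence of policies simply induces a corresponding sequence of metrics, each obtained from this proposition, rather than requiring a single joint fixed point in $(d,\pi)$.
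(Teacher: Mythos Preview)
Your proposal is correct, and in fact you identify exactly the same key idea as the paper: for fixed $\pi$, a task $(\state,\goal)$ is just a state in the augmented product space $\mathcal{S}\times\mathcal{G}$, with the goal carried along unchanged by the dynamics. The difference is one of economy. The paper simply defines the super-MDP $\mathcal{M}'$ with state space $\mathcal{S}':=\mathcal{S}\times\mathcal{G}$ and reward $\mathcal{R}':=\mathds{1}(s'=(g,g))$, observes that $\mathcal{F}$ on the GCMDP coincides with the standard on-policy operator on $\mathcal{M}'$, and then invokes \cref{thm:pi_castro} as a black box to conclude existence and uniqueness of the fixed point. You instead unpack Castro's proof and re-derive the Banach contraction argument (completeness of $(\mathfrak{met},\|\cdot\|_\infty)$, closure of $\mathcal{F}$ on pseudometrics, cancellation of the reward term, non-expansiveness of $W$ in its ground metric). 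Your route is more self-contained and makes explicit why the $(1-c)$ / $c$ weighting yields a $c$-contraction, which is pedagogically useful; the paper's route is shorter and avoids repeating machinery already established in the cited reference. Your remark on the ``continuously improving'' policy hypothesis is apt---the paper's proof also treats $\pi$ as fixed and does not engage with that clause.
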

\begin{proof}
We start from the result in Theorem 2 of \citet{castro20bisimulation}, also included as \cref{thm:pi_castro} here. We can reconstruct our GCMDP as an equivalent, standard super-MDP $\mathcal{M}'$ by concatenating the state and goal spaces to construct a new state space: $\mathcal{S}':= \mathcal{S}\times\mathcal{G}$ with new reward function $\mathcal{R}':=\mathds{1}(s'=(g,g))$. We can then rewrite $\mathcal{F}$ as:
\begin{equation*}
    \mathcal{F}_{\mathcal{M}'}(d,\pi)(\state'_i, \state'_j) = 
(1-c)|\mathcal{R}'(\state'_i, \pi(\state'_i) - \mathcal{R}'(\state'_j, \pi(\state'_j))| 
+ c W(d)(\mathcal{P}^{\pi}_{\state'_i},\mathcal{P}^{\pi}_{\state'_j}). 
\end{equation*}
It should be clear that this is equivalent to the $\mathcal{F}$ defined for the GCMDP as the reward functions are identical and the dynamics do not change, since $\mathcal{P}$ does not depend on the goal. We can then apply \cref{thm:pi_castro} to this super-MDP to show that $\mathcal{F}_{\mathcal{M}'}$ has a unique fixed point which is the $\pi$-bisimulation metric, proving that the equivalent $\mathcal{F}$ also has the same unique fixed point.
\end{proof}

\begin{proposition}
	For any $\pi \in \mathscr{P}(\mathcal{A})^{\mathcal{S}}$ and states $x,y \in \mathcal{S}$, we have $|V^\pi(x) - V^\pi(y)| \leq d^\pi(x, y)$.
  \label{prop:mico_valueFunctionBound}
\end{proposition}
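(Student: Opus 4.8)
The plan is to prove the bound by coupling the fixed-point characterization of $d^\pi$ from \cref{thm:pi_castro} with a parallel iterative characterization of $V^\pi$, and to link the two through Kantorovich--Rubinstein duality for the $1$-Wasserstein distance. Concretely, I would introduce the value-iteration sequence $V_0 \equiv 0$ and $V_{n+1}(x) = \mathcal{R}^\pi_x + \gamma\,\mathbb{E}_{x'\sim\mathcal{P}^\pi_x}[V_n(x')]$, which converges uniformly to $V^\pi$ because the policy Bellman operator is a $\gamma$-contraction, alongside the metric-iteration sequence $d_0 \equiv 0$ and $d_{n+1} = \mathcal{F}^\pi(d_n)$, which increases monotonically to the least fixed point $d^\pi$ guaranteed by \cref{thm:pi_castro}. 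The problem then reduces to the single inductive claim that $|V_n(x) - V_n(y)| \le d_n(x,y)$ for every $n$ and every pair $x,y \in \mathcal{S}$; passing to the limit in $n$ recovers the proposition.

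For the induction, the base case $n=0$ is immediate since both sides vanish. For the inductive step I would assume $|V_n(x) - V_n(y)| \le d_n(x,y)$, which says precisely that $V_n$ is $1$-Lipschitz with respect to the pseudometric $d_n$. Expanding $V_{n+1}$ and applying the triangle inequality gives $|V_{n+1}(x) - V_{n+1}(y)| \le |\mathcal{R}^\pi_x - \mathcal{R}^\pi_y| + \gamma\,|\mathbb{E}_{x'\sim\mathcal{P}^\pi_x}[V_n(x')] - \mathbb{E}_{y'\sim\mathcal{P}^\pi_y}[V_n(y')]|$. The crucial step is to bound the second term by $\gamma\,\mathcal{W}_1(d_n)(\mathcal{P}^\pi_x,\mathcal{P}^\pi_y)$, which follows from the dual form of the $1$-Wasserstein distance: for any $d_n$-Lipschitz test function $f$ one has $|\mathbb{E}_{\mu}[f] - \mathbb{E}_{\nu}[f]| \le \mathcal{W}_1(d_n)(\mu,\nu)$. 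Substituting $f = V_n$ and recognizing the resulting right-hand side as $\mathcal{F}^\pi(d_n)(x,y) = d_{n+1}(x,y)$ closes the step.

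The step I expect to require the most care is this duality argument, since it is where the inductive hypothesis is consumed: one must check that $V_n$ is $1$-Lipschitz with respect to $d_n$ itself (not merely with respect to $d^\pi$), so that it is an admissible test function in the supremum defining $\mathcal{W}_1(d_n)$. A secondary technical point is justifying the passage to the limit from $|V_n(x) - V_n(y)| \le d_n(x,y)$ to the claimed inequality for $V^\pi$ and $d^\pi$; this is handled by the uniform convergence $V_n \to V^\pi$ together with the monotone convergence $d_n \uparrow d^\pi$ furnished by \cref{thm:pi_castro}, so the inequality survives the limit. A slightly slicker alternative would avoid the explicit sequences by showing directly that $\Delta(x,y) := |V^\pi(x) - V^\pi(y)|$ is a pre-fixed point, i.e.\ $\Delta \le \mathcal{F}^\pi(\Delta)$, and then invoking that $d^\pi$ is the \emph{least} fixed point of $\mathcal{F}^\pi$; however, the inductive formulation keeps the Lipschitz bookkeeping most transparent and is the route I would write out in full.
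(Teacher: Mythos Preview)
The paper does not supply its own proof of this proposition: it is reproduced verbatim as Proposition~4.8 from \citet{castro2021mico} and used as a black-box input to the proof of \cref{prop:value_fn_bound}. So there is no in-paper argument to compare against.

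Your inductive argument is correct and is the standard proof in the bisimulation-metric literature. The key step---that the inductive hypothesis makes $V_n$ a $1$-Lipschitz test function for $\mathcal{W}_1(d_n)$, so Kantorovich--Rubinstein duality bounds the expectation difference by the Wasserstein term---is exactly right, and the limit passage via uniform convergence of $V_n$ and monotone convergence of $d_n$ is clean.

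One small caveat on the alternative you sketch at the end: showing $\Delta \le \mathcal{F}^\pi(\Delta)$ and then ``invoking that $d^\pi$ is the \emph{least} fixed point'' does not by itself yield $\Delta \le d^\pi$. A point satisfying $\Delta \le \mathcal{F}^\pi(\Delta)$ is, under Knaster--Tarski, dominated by the \emph{greatest} fixed point, not the least. The argument is rescued here because $\mathcal{F}^\pi$ is a $\gamma$-contraction in sup norm, so the fixed point is unique; equivalently, iterating $\Delta \le \mathcal{F}^\pi(\Delta) \le (\mathcal{F}^\pi)^2(\Delta) \le \cdots$ and using convergence of the iterates to $d^\pi$ gives the bound. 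Your main inductive route avoids this wrinkle entirely, so it is the safer one to write out.
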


\begin{propositionnum}[\ref{prop:value_fn_bound}]
For any two state goal pairs $(\state_i,\goal_i), (\state_j,\goal_j)\in (\mathcal{S}, \mathcal{G})$,
\begin{equation*}
    |V^\pi(\state_i,\goal_i) - V^\pi(\state_j,\goal_j)| \leq d^\pi(\state_i,\goal_i; \state_j,\goal_j).
\end{equation*}
\end{propositionnum}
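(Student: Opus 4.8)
The plan is to reuse the super-MDP reduction introduced in the proof of \cref{prop:goalbisim_fixed_point} and then invoke the single-task value bound of \cref{prop:mico_valueFunctionBound}, the on-policy bisimulation result of \citet{castro20bisimulation}. Concretely, I would lift the GCMDP $\mathcal{M}$ to the equivalent standard MDP $\mathcal{M}'$ with state space $\mathcal{S}' := \mathcal{S}\times\mathcal{G}$ and with reward and transition kernel defined exactly as in that earlier proof. Because $\mathcal{P}$ does not depend on the goal, the goal coordinate is carried along unchanged, and the goal-conditioned policy $\pi(\cdot,\cdot)$ becomes an ordinary Markov policy on $\mathcal{S}'$. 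The entire argument then reduces to transporting the known single-task bound through this identification.

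The first step is to verify that both sides of the claimed inequality are preserved under the lift. For the right-hand side, the on-policy GCB metric $d^\pi(\state_i,\goal_i;\state_j,\goal_j)$ in \cref{eq:gcbm} is, by construction, exactly the on-policy $\pi$-bisimulation metric $d^\pi(\state'_i,\state'_j)$ of $\mathcal{M}'$ evaluated at $\state'_i = (\state_i,\goal_i)$ and $\state'_j = (\state_j,\goal_j)$, since the immediate-reward difference and the Wasserstein terms coincide term for term. For the left-hand side, the goal-conditioned value $V^\pi(\state_i,\goal_i)$ is the expected discounted return of $\pi$ started at $(\state_i,\goal_i)$ under $\mathcal{R}$ and $\mathcal{P}$, which is identical to the value $V^\pi(\state'_i)$ computed in $\mathcal{M}'$, as the two processes generate the same reward sequences under the same transition kernel and the same policy.

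With both quantities identified, the final step is immediate: applying \cref{prop:mico_valueFunctionBound} to the MDP $\mathcal{M}'$ gives $|V^\pi(\state'_i) - V^\pi(\state'_j)| \le d^\pi(\state'_i,\state'_j)$, and translating back through the two identifications yields exactly $|V^\pi(\state_i,\goal_i) - V^\pi(\state_j,\goal_j)| \le d^\pi(\state_i,\goal_i;\state_j,\goal_j)$, as desired.

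I expect the main obstacle to be bookkeeping rather than any deep difficulty: the crux is to argue carefully that the lift $\mathcal{M}\mapsto\mathcal{M}'$ faithfully preserves the reward function, the transition kernel, and the policy, so that the value functions and the two metric terms really do match on the nose; in particular, that the independence of $\mathcal{P}$ from the goal makes the goal coordinate a passive tag that the Wasserstein term handles correctly. If one preferred a self-contained argument avoiding the reduction, the alternative would be a direct fixed-point argument: expand $|V^\pi(\state_i,\goal_i)-V^\pi(\state_j,\goal_j)|$ via the Bellman equation, bound the immediate-reward difference by the first term of $\mathcal{F}(d^\pi,\pi)$, and bound the expected next-state value gap by the Wasserstein term using Kantorovich duality together with the inductive hypothesis that $V^\pi$ is $1$-Lipschitz in $d^\pi$, concluding because $d^\pi$ is the fixed point of $\mathcal{F}$. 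The delicate point there is aligning the metric's discount convention with the value function's $\gamma$, which the reduction approach sidesteps entirely.
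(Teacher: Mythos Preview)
Your proposal is correct and follows essentially the same approach as the paper: lift the GCMDP to the super-MDP $\mathcal{M}'$ with state space $\mathcal{S}\times\mathcal{G}$ and apply \cref{prop:mico_valueFunctionBound}. If anything, you are more careful than the paper's own proof, which simply asserts the reduction and invokes the cited result without explicitly checking that the value functions and metric terms are preserved; one minor slip is that \cref{prop:mico_valueFunctionBound} is from \citet{castro2021mico}, not \citet{castro20bisimulation}.
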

\begin{proof}
We start from the result in Proposition 4.8 in \citet{castro2021mico}, also reproduced here as \cref{prop:mico_valueFunctionBound}. 
We again reconstruct our GCMDP as an equivalent, standard super-MDP $\mathcal{M}'$ by concatenating the state and goal spaces to construct a new state space: $\mathcal{S}':= \mathcal{S}\times\mathcal{G}$ with new reward function $\mathcal{R}':=\mathds{1}(s'=(g,g))$. We can now apply \cref{prop:mico_valueFunctionBound} to this super-MDP, showing the above bound to be true.
\end{proof}

\begin{propositionnum}[\ref{prop:downstream}]
If we assume that the goal space is the same as the state space, i.e. $\mathcal{G} := \mathcal{S}$, then \ouracronym learns a representation sufficient to learn the optimal policy for any downstream reward function that can be expressed as $\mathcal{R}:\mathcal{S} \mapsto \mathbb{R}$.
\end{propositionnum}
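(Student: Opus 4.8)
The plan is to reduce the general state-only reward case to a family of goal-reaching tasks, show that the partition induced by $\psi$ refines the goal-conditioned bisimulation relation, and then invoke a standard stochastic-approximation convergence result under the stated visitation and step-size assumptions.

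First I would apply \cref{re:rew_fns}: since $\mathcal{S}$ is discrete and $\mathcal{G}:=\mathcal{S}$, any reward $\mathcal{R}:\mathcal{S}\mapsto\mathbb{R}$ decomposes as $\mathcal{R}=\sum_k \alpha_k \mathds{1}(\cdot=s_k)$, where each component corresponds to the goal-reaching task with goal $\goal_k=s_k$ (well-defined because $\mathcal{G}:=\mathcal{S}$). By linearity of the value function — the same observation that drives \cref{prop:value_bound_anyr}, namely $V_R^\pi(\state)=\sum_k \alpha_k V^\pi(\state,\goal_k)$ — it suffices to show that $\psi$ retains the information needed to solve every individual goal-reaching task $(\cdot,\goal_k)$; the general case then follows by superposition.

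Second, I would show that the equivalence classes defined by $\psi$ refine the relation of \cref{def:gc_bisim}. Using the analogy identity $\psi(\goal)-\psi(\state)=\phi(\state,\goal)$ from \cref{eq:analogy}, if $\psi(\state_i)=\psi(\state_j)$ then $\phi(\state_i,\goal)=\phi(\state_j,\goal)$ for every $\goal\in\mathcal{G}$, so the induced $\ell_1$ distance — and hence the GCB metric $d^\pi$ of \cref{eq:gcbm}, whose fixed point is guaranteed by \cref{prop:goalbisim_fixed_point} — vanishes between the tasks $(\state_i,\goal)$ and $(\state_j,\goal)$ for all goals. Because $d^\pi$ is a nonnegative-weighted sum of a reward-difference term and a Wasserstein term, each individually nonnegative, a zero total distance forces both to vanish: the reward equality $\mathcal{R}(\state_i,\pi(\state_i,\goal),\goal)=\mathcal{R}(\state_j,\pi(\state_j,\goal),\goal)$ and a zero Wasserstein distance between the two successor distributions, the latter certifying that these distributions agree when projected onto the bisimulation partition. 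These are exactly the two conditions of \cref{def:gc_bisim}, so $\psi$ is a model-irrelevance (bisimulation) abstraction of the super-MDP $\mathcal{M}'$ with $\mathcal{S}':=\mathcal{S}\times\mathcal{G}$ used in the proof of \cref{prop:goalbisim_fixed_point}. I would then close the argument with a convergence result for abstract $Q$-learning: under the assumptions that every state-action pair is visited infinitely often and the step-sizes satisfy the usual Robbins--Monro conditions, $Q$-learning over the abstract state space defined by $\psi$ converges, and because the abstraction is model-irrelevant the abstract optimal action-value function coincides with $Q^*$ of the original MDP~\citep{li2006stateabs,Givan2003EquivalenceNA}. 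Applying this to each goal-reaching task and recombining via the linear decomposition of the first step yields convergence to the optimal state-action value function for the arbitrary reward $\mathcal{R}$.

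The hard part will be the second step, specifically the transition condition: establishing that equal $\psi$-values yield not merely equal values (which is immediate from \cref{prop:value_bound}) but the structural model-irrelevance property that the successor distributions agree on the abstract partition. The clean way around this is to read the condition off the fixed-point characterization of $d^\pi$ rather than from value equivalence, since a vanishing Wasserstein term in \cref{eq:gcbm} directly certifies the required agreement of next-state distributions over the bisimulation classes, which is precisely what the convergence theorem needs.
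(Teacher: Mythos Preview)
Your approach is substantially more elaborate than the paper's and takes a genuinely different route. The paper's proof is essentially two lines: it invokes \cref{re:rew_fns} to write any state-only reward as a tabular combination $\{\alpha_i s_i\}$, and then asserts that the optimal policy for such a reward ``can be expressed as a series of state-reaching policies,'' which are exactly the goal-conditioned policies available when $\mathcal{G}=\mathcal{S}$. There is no explicit appeal to model-irrelevance, no use of the analogy identity \cref{eq:analogy}, and no invocation of abstract $Q$-learning convergence; the argument stays at the level of policies rather than abstractions. Your route --- show $\psi$ collapses exactly the goal-conditioned bisimulation classes, conclude model-irrelevance, then cite a Li--Walsh--Littman style convergence theorem --- is more structural and would, if it went through, give a tighter statement than the paper actually proves.

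However, there is a genuine gap in your second step. The metric you have access to is the \emph{on-policy} metric $d^\pi$ of \cref{eq:gcbm}, not the full bisimulation metric. Vanishing $d^\pi$ tells you that rewards and successor distributions agree \emph{under the actions chosen by $\pi$}, not for all $\action\in\mathcal{A}$; but the two clauses of \cref{def:gc_bisim} are universally quantified over actions, and it is precisely that universal quantification that the model-irrelevance convergence theorems of \citet{li2006stateabs} and \citet{Givan2003EquivalenceNA} require. Your proposed workaround --- reading the transition condition off the fixed-point equation for $d^\pi$ --- does not close this gap, because the fixed point of $\mathcal{F}^\pi$ still only constrains the $\pi$-induced transitions $\mathcal{P}^\pi$. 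So from $\psi(\state_i)=\psi(\state_j)$ you get $\pi$-bisimilarity of $(\state_i,\goal)$ and $(\state_j,\goal)$ for every $\goal$, which suffices for value equivalence under $\pi$ (that is \cref{prop:value_bound}), but not for convergence of abstract $Q$-learning to $Q^*$. To make your argument work you would need either an additional hypothesis that $\phi$ embeds the action-universal metric, or a separate lemma showing that, under the stated visitation assumptions, on-policy bisimilarity for an improving $\pi$ eventually implies full bisimilarity --- neither of which is available in the paper.
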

\begin{proof}
As noted in \cref{re:rew_fns}, a state-only reward function can be written as a set of all states and their rewards,  $\mathcal{R}=\{\alpha_i s_i\}_{i=0}^{|\mathcal{S}|}$, where $\alpha_i\in\mathbb{R}, \forall i$. The optimal policy $\pi^*$ for any reward function $\mathcal{R}$ can then be expressed as a series of state-reaching policies, which will be within the set of goal-conditioned policies where $\mathcal{G}=\mathcal{S}$.
\end{proof}
\begin{propositionnum}[\ref{prop:value_bound_anyr}]
For any reward function $R\in\mathcal{R}$ as defined above and a given policy $\pi$,
\begin{equation*}
    |V_R^\pi(\state_i) - V_R^\pi(\state_j)| \leq \sum_{k=0}^{|\mathcal{S}|} \alpha_k d^\pi(\state_i,\goal_k; \state_j,\goal_k).
\end{equation*}
\end{propositionnum}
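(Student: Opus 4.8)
The plan is to reduce the claim to \cref{prop:value_fn_bound} by expressing the reward-$R$ value function as a linear combination of goal-conditioned value functions and then applying the triangle inequality. First I would invoke \cref{re:rew_fns} to write the state-only reward as $R(\state)=\sum_{k}\alpha_k\,\mathds{1}(\state=\state_k)$, so that $R$ is a weighted sum of the single-state indicator rewards $R_k(\state):=\mathds{1}(\state=\state_k)$. Since the hypothesis $\mathcal{G}:=\mathcal{S}$ is in force, each index $k$ corresponds to a goal $\goal_k=\state_k$, and the goal-conditioned reward $\mathcal{R}(\state,\action,\goal_k)=\mathds{1}(\state'=\goal_k)$ is exactly the indicator $R_k$ evaluated on the successor state.

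The key step is the linearity of the value function in the reward for a fixed policy $\pi$. Holding $\pi$ fixed, the induced trajectory distribution from any start state does not depend on which goal we use to score rewards, so
\begin{align*}
V_R^\pi(\state) &= \mathbb{E}_\pi\!\left[\sum_t \gamma^t R(\state_t)\,\Big|\,\state_0=\state\right] \\
&= \sum_{k}\alpha_k\,\mathbb{E}_\pi\!\left[\sum_t \gamma^t R_k(\state_t)\,\Big|\,\state_0=\state\right] \\
&= \sum_{k}\alpha_k\,V^\pi(\state,\goal_k),
\end{align*}
where the last equality identifies the value of $\pi$ under the indicator reward $R_k$ with the goal-conditioned value function for goal $\goal_k$ (up to the standard reward-on-entering convention already baked into $\mathcal{R}$). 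Given this decomposition, I would finish by subtracting the two expansions, applying the triangle inequality, and bounding each term with \cref{prop:value_fn_bound}:
\begin{align*}
|V_R^\pi(\state_i)-V_R^\pi(\state_j)| &= \Big|\sum_k \alpha_k\big(V^\pi(\state_i,\goal_k)-V^\pi(\state_j,\goal_k)\big)\Big| \\
&\le \sum_k |\alpha_k|\,|V^\pi(\state_i,\goal_k)-V^\pi(\state_j,\goal_k)| \\
&\le \sum_k |\alpha_k|\, d^\pi(\state_i,\goal_k;\state_j,\goal_k),
\end{align*}
which matches the stated bound, with the understanding that the $\alpha_k$ should appear in absolute value (or be taken nonnegative), since $d^\pi\ge 0$ and otherwise the triangle-inequality step would not be tight in sign.

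The hard part will be justifying the linearity decomposition rigorously, because the statement fixes a single policy $\pi$, yet the metric $d^\pi$ on the right-hand side is the goal-conditioned $\pi$-bisimulation metric, whose definition in \cref{eq:gcbm} instantiates the policy at each goal $\goal_k$. Making the argument airtight requires that the trajectory distribution used to define $V^\pi(\cdot,\goal_k)$ be the same across all $k$ --- that is, $\pi$ must be goal-independent at the point where linearity is invoked --- while still being the policy against which $d^\pi$ is measured. I would therefore either fix $\pi$ to be a single, goal-independent policy throughout, or argue explicitly that the evaluation policy appearing in the $V^\pi$ terms coincides with the one defining $d^\pi$; this reconciliation of conventions, together with the reward-on-entering shift between $R_k$ and $\mathcal{R}(\cdot,\cdot,\goal_k)$, is the only delicate point, as everything else is a routine application of the triangle inequality and \cref{prop:value_fn_bound}.
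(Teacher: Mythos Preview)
Your proposal is correct and follows essentially the same route as the paper: decompose $V_R^\pi$ as a weighted sum of goal-conditioned values, apply the triangle inequality, and bound each summand by \cref{prop:value_fn_bound}. The paper handles your sign concern by assuming without loss of generality that $\alpha_k\ge 0$, and it asserts the linearity decomposition $V_R^\pi(\state_i)=\sum_k\alpha_k V^\pi(\state_i,\goal_k)$ without addressing the goal-dependence subtlety you flag in your final paragraph.
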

\begin{proof}
Without loss of generality, we assume $\alpha_i\geq 0,\forall i$.
Starting from \cref{prop:value_fn_bound} we have that $\forall g\in\mathcal{S},$ 
\begin{equation}\label{eq:apply_value_fn}
    |V^\pi(\state_i,\goal) - V^\pi(\state_j,\goal)| \leq d^\pi(\state_i,\goal; \state_j,\goal).
\end{equation}
We can express $V^\pi_R(\state_i)$ as a sum over goal-conditioned values:
$$V_R^\pi(\state_i)=\sum_{k=0}^{|\mathcal{S}|}\alpha_k V^\pi(\state_i,\goal_k).$$
We can plug this into the LHS:
$$|V_R^\pi(\state_i) - V_R^\pi(\state_j)| = \bigg|\sum_{k=0}^{|\mathcal{S}|}\alpha_k V^\pi(\state_i,\goal_k) - \sum_{k=0}^{|\mathcal{S}|}\alpha_k V^\pi(\state_j,\goal_k)\bigg|.$$
Combine terms into one summation,
$$|V_R^\pi(\state_i) - V_R^\pi(\state_j)| = \bigg|\sum_{k=0}^{|\mathcal{S}|}\alpha_k \big( V^\pi(\state_i,\goal_k) -  V^\pi(\state_j,\goal_k)\big)\bigg|.$$
Then use the triangle inequality to move the absolute value inside the sum,
$$|V_R^\pi(\state_i) - V_R^\pi(\state_j)| \leq \sum_{k=0}^{|\mathcal{S}|}\alpha_k \big| V^\pi(\state_i,\goal_k) -  V^\pi(\state_j,\goal_k)\big|.$$
Plugging in \cref{eq:apply_value_fn},
$$|V_R^\pi(\state_i) - V_R^\pi(\state_j)| \leq \sum_{k=0}^{|\mathcal{S}|} \alpha_k d^\pi(\state_i,\goal_k; \state_j,\goal_k).$$
\end{proof}

\section{Additional Ablations}
\label{app:ablations}
In this section, we run experiments that modify various aspects of our algorithm to show how different design choices and hyperparameters affect performance.

\subsection{Dimensionality of $\phi$ and $\psi$}
We start by examining how changing the latent dimensionality of the two spaces we learn, denoted by $\phi$ and $\psi$, affect downstream success rate on the \texttt{Drawer} task. In \cref{fig:ablatedim} we find that this hyperparameter does not significantly affect downstream performance of the control task. All other ablations are performed on the \texttt{Button and Drawer}.
\begin{figure}[h]
  \centering
  \includegraphics[width=0.495\linewidth]{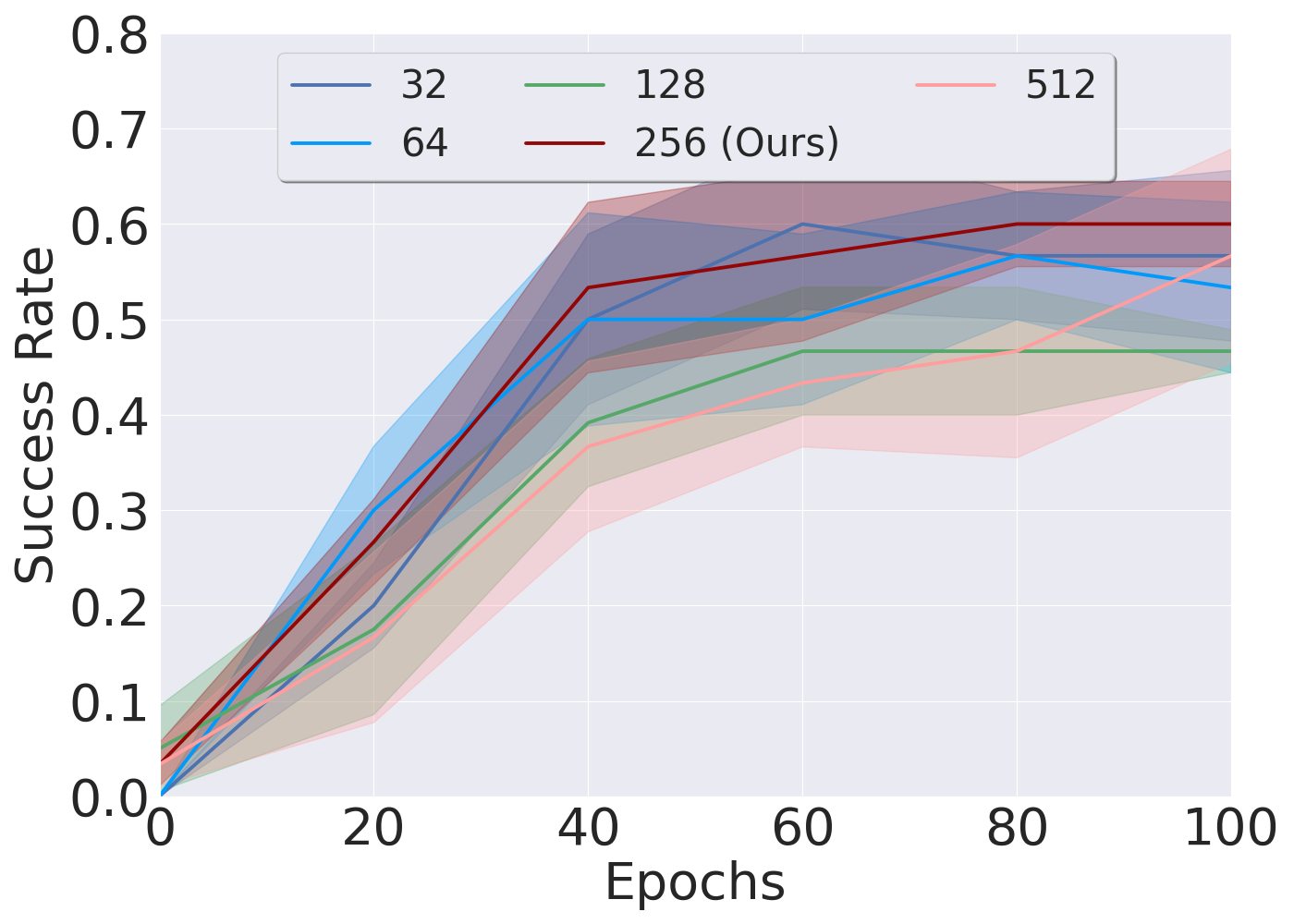}
  \caption{Ablation on different dimensionality for $\phi$ and $\psi$. We see that our tuned hyperparameter of 256 performs best, but the range of values from 32 to 512 generally does not affect success rate significantly. }
  \label{fig:ablatedim}
\end{figure}

\subsection{Grounding $\psi$ with Goal Point $\phi(g,g)$}
As discussed in \cref{sec:method}, the loss for $\psi$ in \cref{eq:psi_obj} which differs from the one proposed in \cref{eq:analogy}. Considering $\phi$ is trained with pairs, $\phi(s,g)$ alone might not properly represent an informative point. Instead, $\phi(s,g) - \phi(g,g)$ will more likely have a well defined distance as it involves a comparison of pairs or tasks. We refer to this as grounding $\psi$ with the goal point. We ablate on this difference in \cref{fig:ablateground} by running GCB where $\psi$'s objective is either \cref{eq:psi_obj} (GCB-G) or \cref{eq:analogy} (GCB-NG). Grounding $\psi$ has a strong performance boost compared to not grounding. 

\begin{figure}[h]
  \centering
  \includegraphics[width=0.495\linewidth]{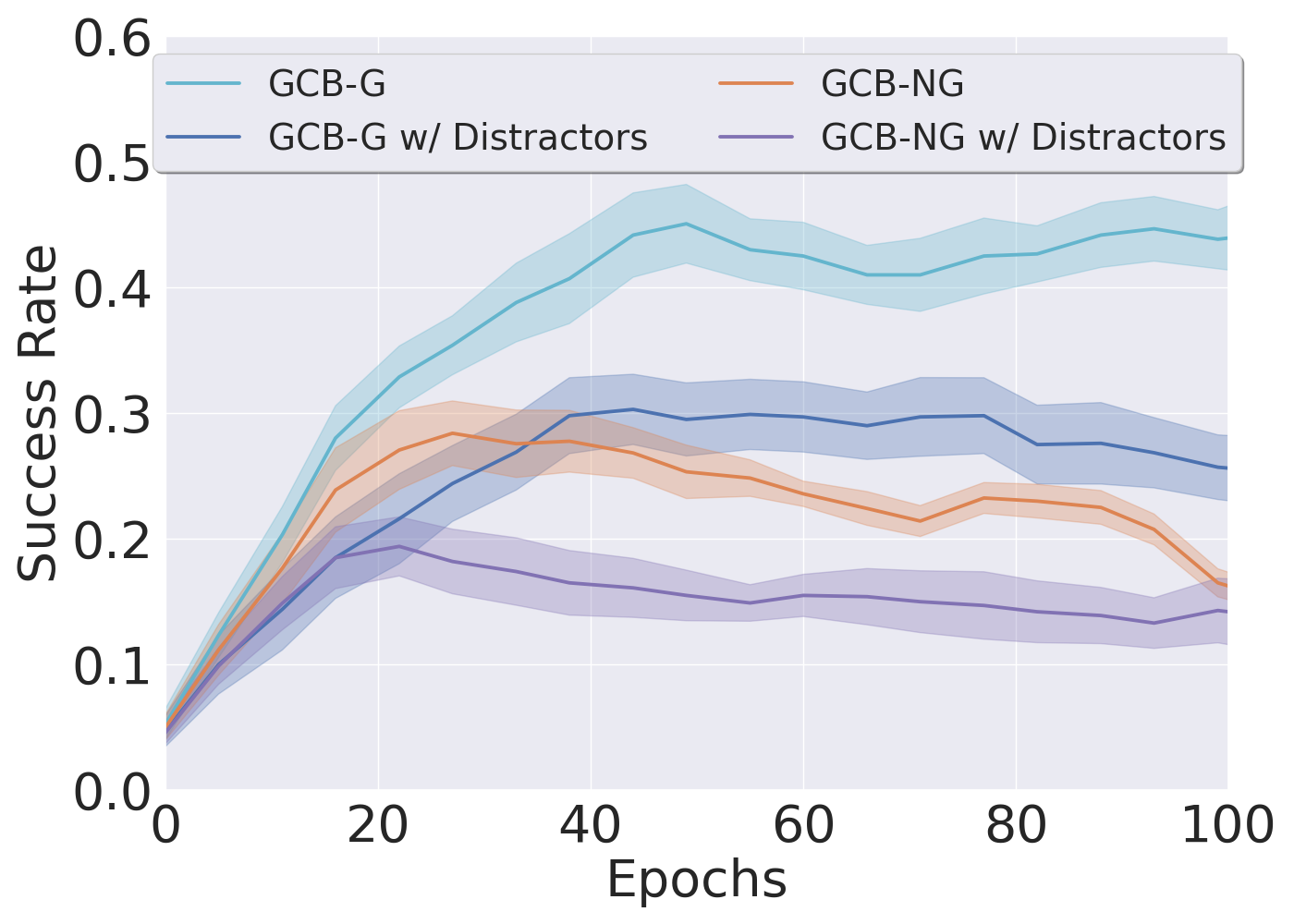}
  \caption{Ablation on different $\psi$ objectives. \ouracronym-G uses the grounded loss and \ouracronym-NG does not. Video Distractors are added for some ablations.}
  \label{fig:ablateground}
\end{figure}

\subsection{Using $\ell_1$ vs $\ell_2$ Norm Objective}
$\phi$'s loss in \cref{eq:phi_loss} uses an $\ell_1$ metric loss on pairs to fit $\phi$'s encoder. In this section we ablate on using $\ell_1$ or $\ell_2$ ($||\phi(s_i, g_i) -  \phi(s_j, g_j)||_2)$) for the first portion of $\phi$'s loss. We report results in \cref{fig:ablatemetric}. There is a large difference between the two losses, $\ell_1$ achieving far better performance. We suspect that $\ell_1$ results in more stable training of $\phi$.
\begin{figure}[h]
  \centering
  \includegraphics[width=0.495\linewidth]{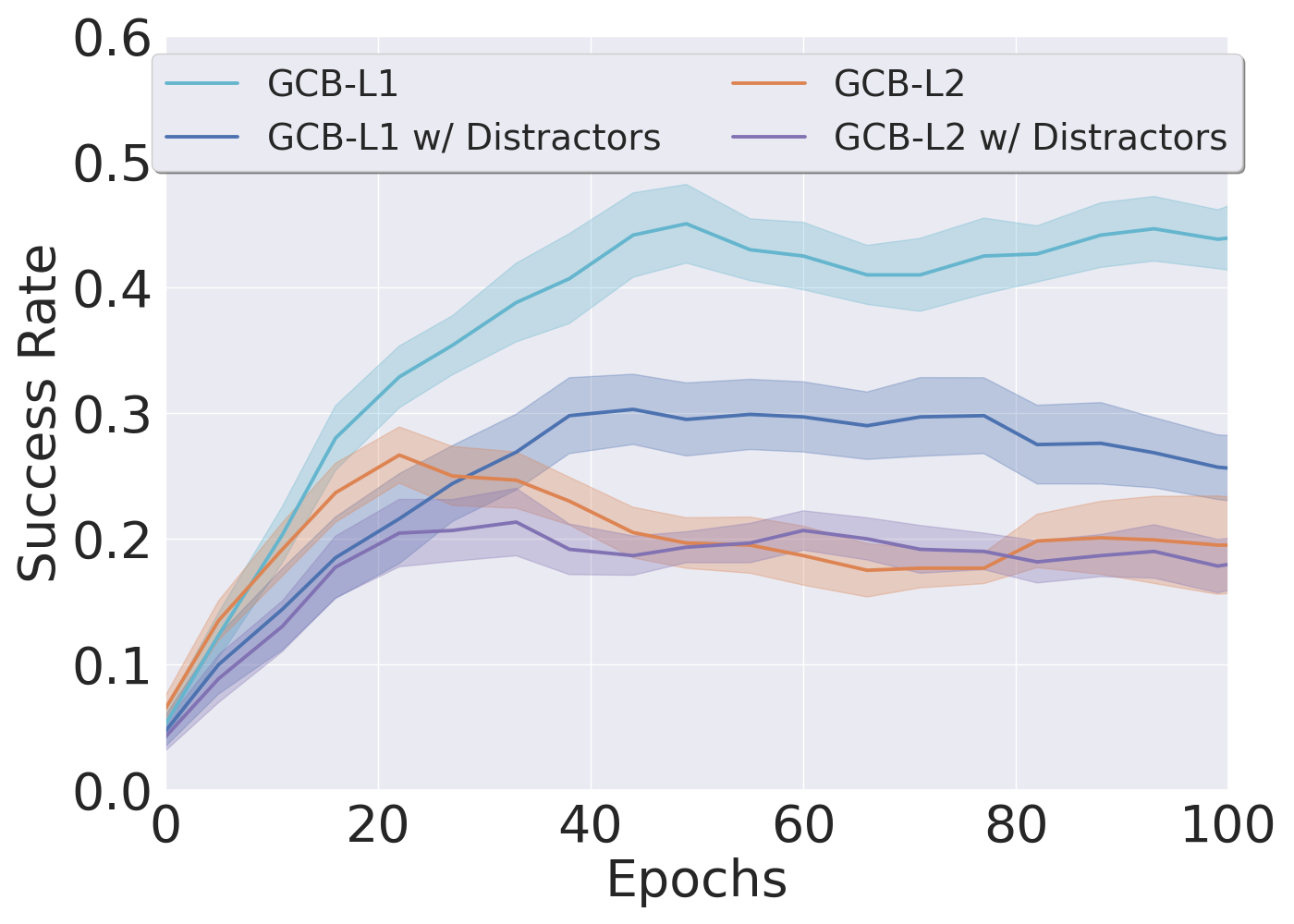}
  \caption{Ablation on different $\phi$ objectives. \ouracronym-$\ell_1$ uses an $\ell_1$ metric loss and \ouracronym-$\ell_2$ uses $\ell_2$ . Video Distractors are added for some ablations.}
  \label{fig:ablatemetric}
\end{figure}

\subsection{Backpropagating RL Objectives through \ouracronym}

All other baselines compared to in \cref{sec:exps} allow critic gradients from IQL's Q function into the encoder. We ablate using critic gradients with \cref{eq:psi_obj} to see if it is helpful. Results are shown in \cref{fig:ablateCG}. Critic gradients seem marginally harmful for downstream performance, so we decided to not use them in \ouracronym.
\begin{figure}[h]
  \centering
  \includegraphics[width=0.495\linewidth]{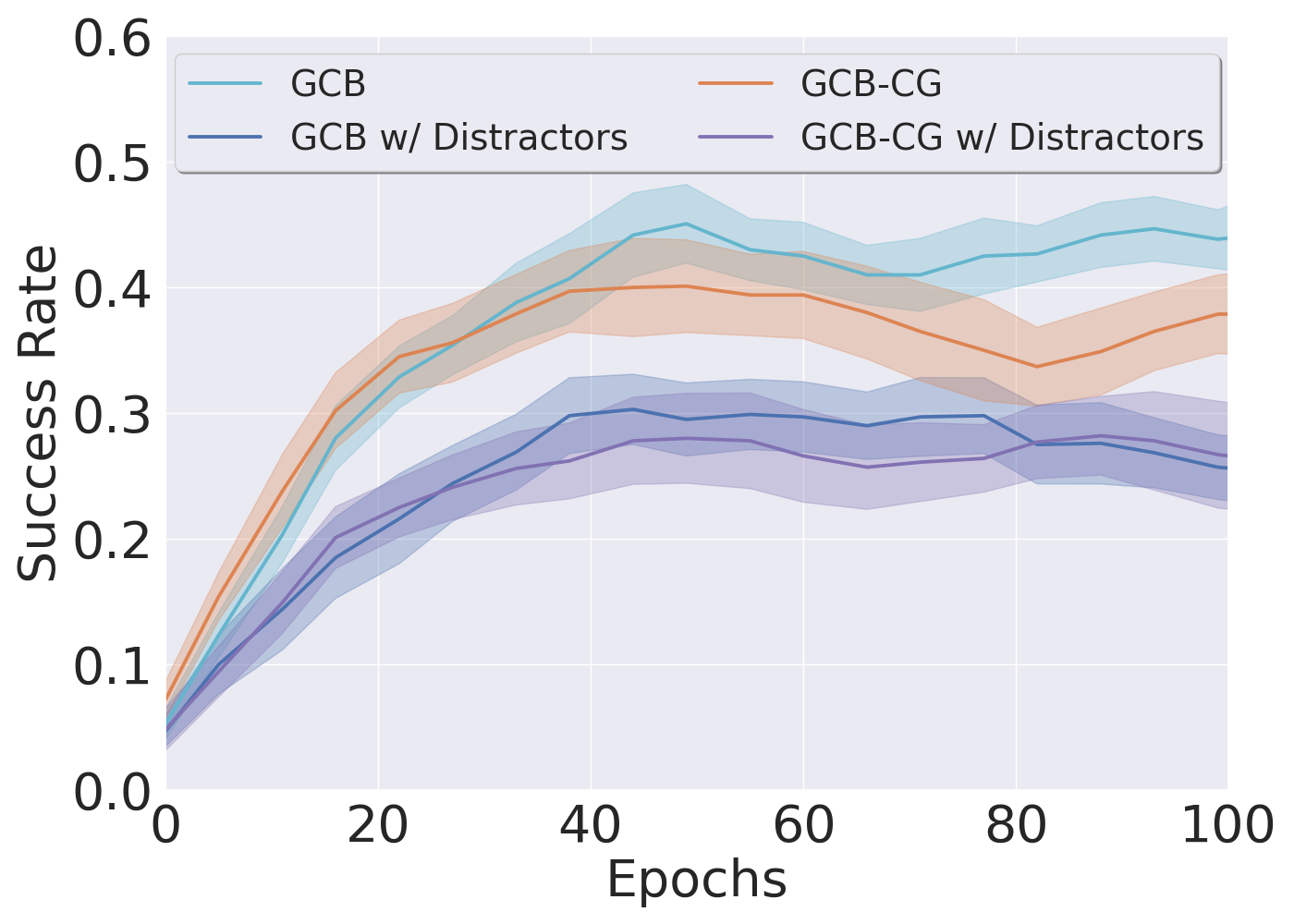}
  \caption{Ablation on different $\phi$ objectives. \ouracronym uses no critic gradients and \ouracronym-CG uses critic gradients. Video Distractors are added for some ablations.}
  \label{fig:ablateCG}
\end{figure}

\subsection{Adding a Reward Decoder}
In the DBC implementation, a reward decoder is used to help stabilize the representation \citep{zhang2021dbc}. Considering that rewards are sparse in our domain, it might be harder to train a decoder. We ablate using and not using a reward decoder. The reward decoder $\mathcal{R}(\phi(s,g), \phi(s',g))$ is an MLP that is trained to predict if a task transition goes to the goal. The reward decoder loss is shown in Equation \ref{eq:r_loss} and the gradient backpropagates through the $\phi$ encoder.

\begin{align}\label{eq:r_loss}
\mathcal{L}_{\mathcal{R}} &= \bigg(\mathcal{R}(\phi(s_i,g_i), \phi(s'_i,g_i)) - r_i \bigg)^2,
\end{align} 

We report results in \cref{fig:ablatedecoder}. The results show that \ouracronym has very little performance difference with and without a reward decoder. We decided to leave it in the implementation used for the main results.
\begin{figure}[h]
  \centering
  \includegraphics[width=0.495\linewidth]{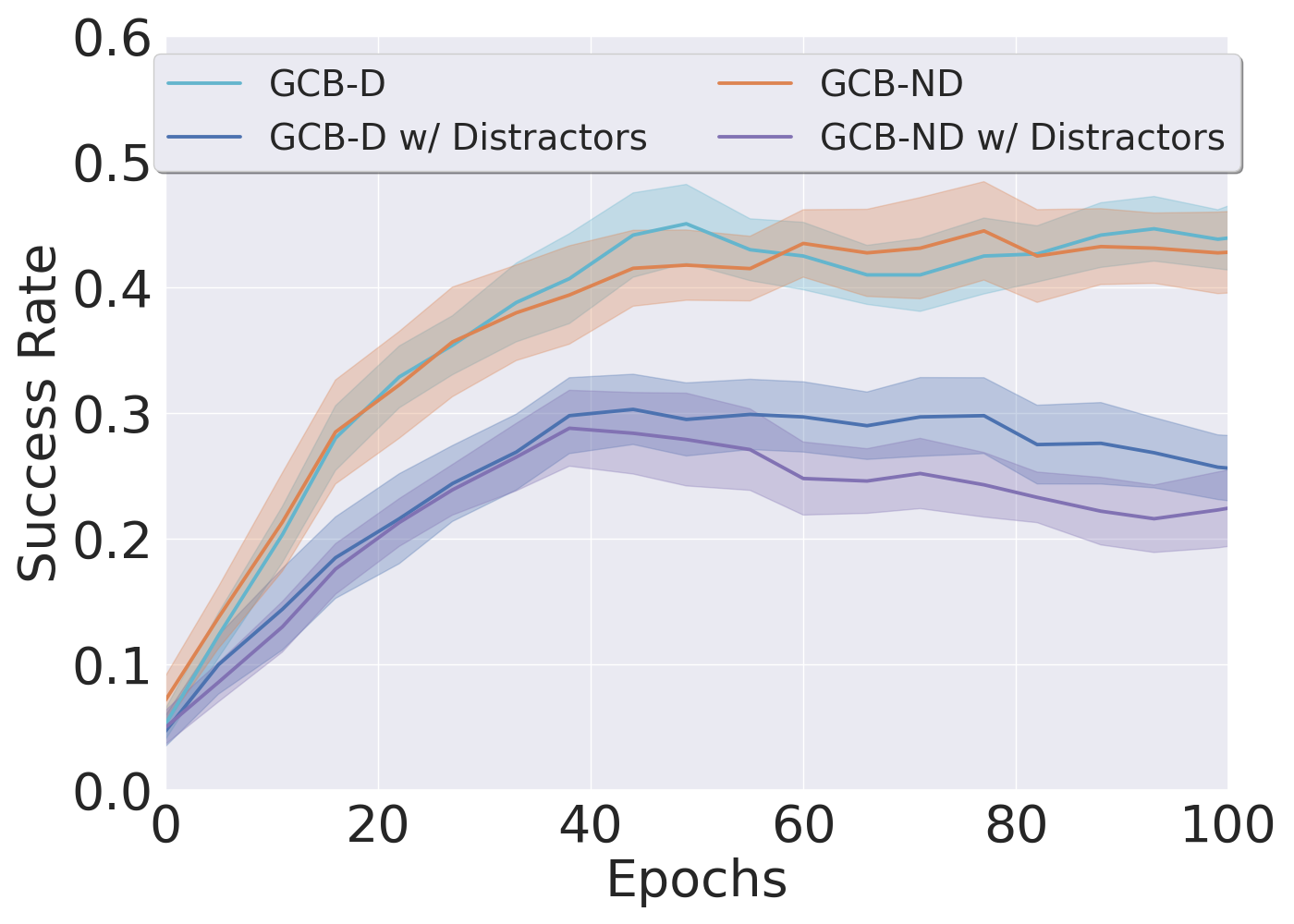}
  \caption{Ablation on different $\phi$ objectives. \ouracronym-D uses a reward decoder and \ouracronym-ND doesn't use a reward decoder. Video Distractors are added for some ablations.}
  \label{fig:ablatedecoder}
\end{figure}

\subsection{Using Dynamics Model vs Next Observation \ouracronym}

In the definitions of bisimulation metrics discussed in Section \ref{sec:prelim} and Section \ref{sec:concepts}, a dynamics model $\mathcal{P}$ was used to define similar next states. There are two possible instantiations for implementing a learned version of this metric: learning a dynamics model that outputs a Gaussian next state distribution and calculating the Wasserstein, as proposed in \citet{zhang2021dbc}, or using sampled interactions, as introduced in \citet{castro2021mico}. While we took the latter approach as our main method, we investigate using a learned dynamics approach here.  An MLP dynamics model $f(\phi(s,g), a)$ could be learned and used in second portion of our loss in Equation \ref{eq:phi_loss}. We define that new $\phi$ loss as follows:

\begin{align}\label{eq:phi_f_loss}
\mathcal{L}_\phi &= \bigg(||\phi(s_i, g_i) - \phi(s_j, g_j)||_1  - ||r_i - r_j||_2  \\ 
&- \gamma ||f(\bar{\phi}(s_i, g_i), a_i) - f(\bar{\phi}(s_j, g_j), a_j) + \bar{\phi}(s_i, g_i) - \bar{\phi}(s_j, g_j)||_2\bigg)^2 \nonumber,
\end{align} 
as well as a dynamics loss for training the model $f$:
\begin{align}\label{eq:f_loss}
\mathcal{L}_f &= \bigg(f(\phi(s,g), a)  + \phi(s',g) - \phi(s,g) \bigg)^2 \nonumber.
\end{align} 
We present ablations of using the standard Equation \ref{eq:phi_loss} loss versus using a dynamics model and \ref{eq:phi_f_loss} in Figure \ref{fig:ablatedynamics}. We find that using next observation results in a more stable learning process compared to using a learned dynamics model, although the dynamics model method is capable of achieving similar peak performance.

\begin{figure}[h]
  \centering
  \includegraphics[width=0.495\linewidth]{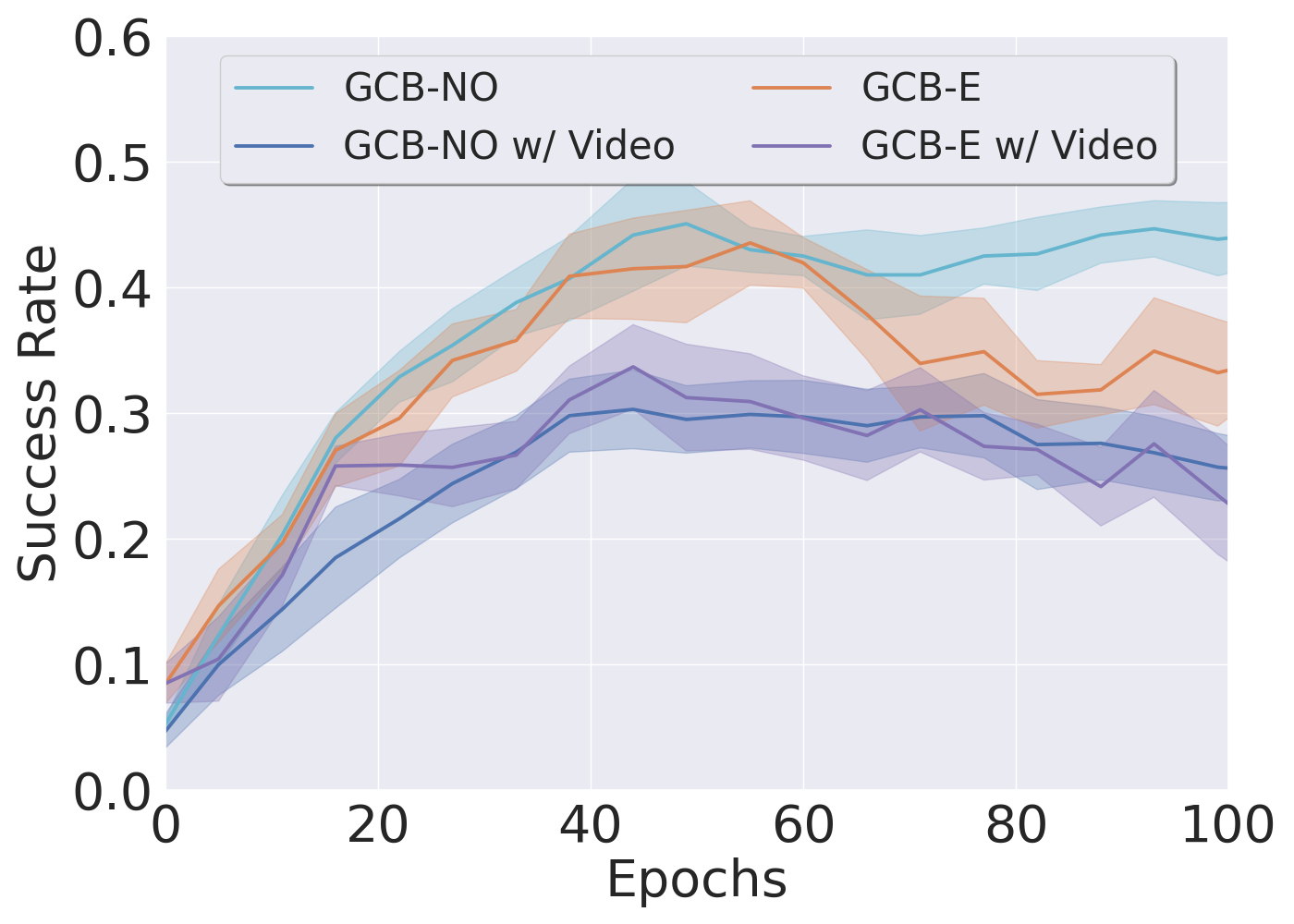}
  \caption{Ablation on different $\phi$ objectives. \ouracronym-NO uses next observation and \ouracronym-E uses the dynamics model based loss. Video Distractors are added for some ablations.}
  \label{fig:ablatedynamics}
\end{figure}

\section{Implementation Details}
\label{app:implementation_details}
We use a similar encoder architecture to \cite{laskin2020rad} except we use a six layer convolutional network. The encoder has kernels of size 3 × 3 with 32 channels for all the convolutional layers and set stride to 1 everywhere, except of the first convolutional layer, which has stride 2, and interpolate with ReLU activations. 
We modify a PyTorch implementation of IQL \citep{kostrikov2021iql} for our offline RL algorithm.
The hyperparameters used for the experiment are in~\cref{tab:hyperparams}. 

\begin{table*}[]
    \centering
    \begin{tabular}{c|c}
      Hyperparameter  & Value \\
      \hline
      Offline Sample Count  &  $50000$ \\
      $\gamma$ & {$0.99$}\\
      Batch Size & {$256$}\\
      $\psi$ or Encoder LR & {$5 \cdot 10^{-4}$}\\
      $\psi$ or Encoder Weight Decay & {$10^{-4}$}\\
      $\phi$ LR & {$10^{-4}$}\\
      $\phi$ Weight Decay & {$ 10^{-3}$}\\
      Actor LR & {$10^{-4}$}\\
      {Actor $\beta$} & {$0.9$}\\
      {Actor logstd Bounds} & {$[-10, 2]$}\\
      {Critic LR} & {$10^{-4}$}\\
      {Critic $\beta$} & {$0.9$}\\
      {Critic $\tau$} & {$0.005$}\\
      {Quantile} & {$0.7$}\\
      {Optimizer} & {Adam}\\
      {Adam $\beta_1$} & {0.9}\\
    \end{tabular}
    \caption{GCB Hyperparameters.}
    \label{tab:hyperparams}
\end{table*}

\section{Environment Details}
\label{app:env_details}

We evaluate our method on two Sawyer manipulation environment domains: \texttt{Drawer} and \texttt{Button and Drawer}. We list details of how we constructed our environments for the PyBullet simulation tasks in Section \ref{sec:exps}. In the following paragraphs we describe specific details about how we collected data, evaluated the agent, and formed analogies.

For each episode of data collection, we first initialize the PyBullet environment with at most four random objects spawned with random poses in addition to a randomly chosen drawer pose and openness such that the drawer handle, when fully open, fits the xy constraints of the environment. We then roll out a scripted policy $\pi^*(s, g) = a^*$ to complete its task until the goal is achieved or until the 75 timestep limit is reached. In the \texttt{Drawer} environment, the task is to open or close the drawer to randomly chosen openness. In the \texttt{Button and Drawer} environment, the task is to either do the same task as in the \texttt{Drawer} environment or to press a button. 

For each episode of evaluation, we randomly initialize the PyBullet environment in the same manner as in data collection. The goal $g$ is collected by rolling out a scripted policy $\pi^*(s, g) = a^*$, rendering the final frame, and marking that as the goal. We then reinitialize the environment with the same object poses for evaluating an agent conditioned on $g$.

The \texttt{Drawer} environment tasks a robotic agent to open or close a drawer to a specific drawer openness. Specifically, it tasks the agent to guide its state image $s$ to a drawer handle and position it as shown in the goal image $g$ using its arm. The environment consists of a drawer with a random pose and at most four random distractor objects.

The \texttt{Button and Drawer} environment tasks a robotic agent to either press a button or guide a drawer handle to a specific drawer openness. The environment consists of a button on top of two drawers stacked on top of one another along with at most four random distractor objects. The top drawer is taken from the \texttt{Drawer} environment, and the bottom drawer is opened or closed by pressing the button.

To form analogous state goal pairs $s_a, g_a$, we first randomly initialize the objects in the the \texttt{Button and Drawer} environment. We then take the state and add noise to the pose of the drawer (translation and rotation). The colors of the drawer, number and type of distractor objects spawned, and the video distractor in the background of $(s_a, g_a)$ will most likely change as well.

In \cref{tb:table1} and \cref{tb:table2}, we provide the environment parameter settings we used on each environment. We also provide some extra screenshots of states and goals. For the table below, let $x^{(1)}_p$ denote the top drawer handle xyz position, $d^{(1)}_p$ denote the desired top drawer handle xyz position, $x^{(2)}_p$ denote the bottom drawer handle xyz position, $d^{(2)}_p$ denote the desired bottom drawer handle xyz position, $x^{(3)}_p$ denote the button z position, and $d^{(3)}_p$ denote the desired button z position.

\begin{figure*}[htb]
 \centering
 \subfloat[\texttt{Drawer} initial state]{\includegraphics[width=0.2\textwidth]{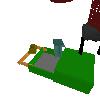}}
 \hspace{0.05\textwidth}
 \subfloat[\texttt{Drawer} final state]{\includegraphics[width=0.2\textwidth]{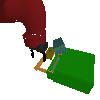}}
 \hspace{0.05\textwidth}
 \subfloat[\texttt{Drawer and} \\ \texttt{Button} initial state]{\includegraphics[width=0.2\textwidth]{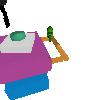}}
 \hspace{0.05\textwidth}
 \subfloat[\texttt{Drawer and} \\ \texttt{Button} final state]{\includegraphics[width=0.2\textwidth]{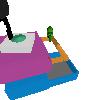}}
 \caption{Sample images from \texttt{Drawer} and \texttt{Drawer and Button} environment.}
\end{figure*}

\begin{table*}[htb]
    \centering
    \begin{tabular}{c|c}
      Parameter & Value \\
      \hline
      Discount  $\gamma$&  $0.99$ \\
      Action Space & {$[-1, 1]^{5}$}\\
      Action Space Description & {[$x$, $y$, $z$, yaw rotation, claw grip]}\\
      DoF &  $5$ \\
      Observation Space (i.e. Image Space) & {$[0,1]^{64\times 64\times 3}$}\\
      Max Time Steps &  $75$ \\
      Offline Dataset Size & 50K transitions \\
      Reward & 
          \begin{tabular}{@{}c@{}}$\mathds{1}_{|x^{(1)}_p-d^{(1)}_p| \leq 0.05}$ \end{tabular} \\
      Number of Distractor Objects & Uniform Random in [0, 4] \\
      Underlying State Space (Not shown to agent) & {$\mathbb{R}^{11}$}\\
      Underlying State Space Description (Not shown to agent) & 
        \begin{tabular}{@{}c@{}}[gripper xyz position, gripper quaternion, gripper tips distance, \\ top drawer xyz position]\end{tabular} \\
    \end{tabular}
    \caption{Parameters used in the \texttt{Drawer} environment.}
    \label{tb:table1}
\end{table*}
\newpage
\begin{table*}[htb]
    \centering
    \begin{tabular}{c|c}
      Parameter & Value \\
      \hline
      Discount  $\gamma$&  $0.99$ \\
      Action Space & {$[-1, 1]^{5}$}\\
      Action Space Description & {[$x$, $y$, $z$, yaw rotation, claw grip]}\\
      DoF &  $5$ \\
      Observation Space (i.e. Image Space) & {$[0,1]^{64\times 64\times 3}$}\\
      Max Time Steps &  $75$ \\
      Offline Dataset Size & 50K transitions \\
      Reward &         
        ${
            \begin{cases} 
              \mathds{1}_{|x^{(1)}_p-d^{(1)}_p| \leq 0.05} & \text{if doing drawer task} \\
              \mathds{1}_{|x^{(2)}_p-d^{(2)}_p| \leq 0.05 \land |x^{(3)}_p-d^{(3)}_p| \leq 0.008} & \text{if doing button task} \\
            \end{cases}
        }$\\
      Number of Distractor Objects & Uniform Random in [0, 4] \\
      Underlying State Space (Not shown to agent) & {$\mathbb{R}^{15}$}\\
      Underlying State Space Description (Not shown to agent) & 
        \begin{tabular}{@{}c@{}}[gripper xyz position, gripper quaternion, gripper tips distance, \\ top drawer xyz position, bottom drawer xyz position, button z position]\end{tabular} \\
    \end{tabular}
    \caption{Parameters used in the \texttt{Drawer and Button} environment.}
    \label{tb:table2}
\end{table*}

\end{document}